\documentclass{article}





\usepackage[final,nonatbib]{neurips_2020}

\usepackage{amsmath,amssymb,amsthm,amsfonts}
\usepackage{graphicx}
\usepackage[ruled]{algorithm2e}
\usepackage{xcolor}
\usepackage{fancyhdr}
\usepackage{hyperref}
\usepackage{caption}
\setlength{\parindent}{0pt}
\setlength{\parskip}{5pt plus 1pt}
\setlength{\headheight}{13.6pt}
\usepackage{wrapfig}

\theoremstyle{plain}
\newtheorem{lem}{\textbf{Lemma}}
\newtheorem{proposition}{\textbf{Proposition}}
\newtheorem{theorem}{\textbf{Theorem}}
\newtheorem{corollary}{\textbf{Corollary}}

 \theoremstyle{definition}
\numberwithin{theorem}{section}
\numberwithin{lem}{section}
\theoremstyle{remark}\newtheorem{remark}{\textbf{Remark}}

\let\vec = \boldsymbol

\def \R {\mathbb{R}}
\def \E {\mathbb{E}}

\usepackage[utf8]{inputenc} 
\usepackage[T1]{fontenc}    
\usepackage{hyperref}       
\usepackage{url}            
\usepackage{booktabs}       
\usepackage{nicefrac}       
\usepackage{microtype}      

\title{Uncertainty Quantification for Inferring \\Hawkes Networks}

%

\author{Haoyun Wang$^{1}$, Liyan Xie$^{1}$, Alex Cuozzo$^{2}$, Simon Mak$^{2}$, Yao Xie$^{1}$ \\
$^{1}$Georgia Institute of Technology,   $^{2}$ Duke University}

\begin{document}

\maketitle

\begin{abstract}

  
Multivariate Hawkes processes are commonly used to model streaming networked event data in a wide variety of applications. However, it remains a challenge to extract reliable inference from complex datasets with uncertainty quantification. Aiming towards this, we develop a statistical inference framework to learn causal relationships between nodes from networked data, where the underlying directed graph implies Granger causality. We provide uncertainty quantification for the maximum likelihood estimate of the network multivariate Hawkes process by providing a non-asymptotic confidence set. The main technique is based on the concentration inequalities of continuous-time martingales. We compare our method to the previously-derived asymptotic Hawkes process confidence interval, and demonstrate the strengths of our method in an application to neuronal connectivity reconstruction.


\end{abstract}

\section{Introduction}

Recently, there has been a surge of interest in using Hawkes processes networks to model discrete events data in both the statistics and the machine learning community (see a review in \cite{reinhart2018review}). The popularity of the model can be attributed to its wide range of applications, including seismology, criminology, epidemiology \cite{rizoiu2018sir}, social networks  \cite{leskovec2009community}, neural activity \cite{reynaud2013inference}, and so on. The model is capable of capturing a spatio-temporal triggering effect reflected in real-world networks -- one event may trigger subsequent events at different locations. Existing works for recovery of the Hawkes network focus on performing point estimators: most of them rely on estimating influence coefficients (representing the magnitude of the influence) and thresholding by a pre-specified value to recover the Hawkes network structure. However, most existing work does not quantify the uncertainty, for instance, in the form of a confidence interval.

One outstanding issue with point estimators is that, without accurate uncertainty quantification, one cannot claim any statistical significance of the results. For instance, it is difficult to assign a probability to the existence of a directed edge between two nodes. This problem is critical for certain problems, such as causal inference. In Hawkes network models, Granger causality has a very simple form: there is a causal relationship between two nodes in the network if and only if there exists an edge between the two nodes, and there is no casual relationship otherwise \cite{eichler2017graphical}. Thus, uncertainty quantification (UQ) is critical in scientific studies, because we wish to test whether a causal relationship exists between one node to another node at a given statistical confidence level. The development of easy-to-implement and robust UQ tools is crucial for a variety of scientific problems, from medical data to social networks to neural spike train data and more.  


In this paper, we study the uncertainty quantification for maximum likelihood estimates of multivariate Hawkes processes over networks. Each node can represent a location, a region of a brain, or a user in a social network. We are particularly interested in recovering the underlying structure (connections) between different nodes with uncertainty quantification, meaning providing accurate upper and lower confidence intervals for the influence coefficients (which is linked to the causal relationships between these nodes). Since we are particularly interested in the existence (or non-existence) of an underlying edge, one of the most critical aspects of our study is to perform network topology recovery. Motivated by applications where the recovery guarantee is usually required, we focus on quantifying the uncertainty of the maximum likelihood estimate of the unknown parameters by providing confidence intervals (CIs). We proposed a novel non-asymptotic approach to establish a general confidence polyhedral set for hidden network influence parameters (from which the confidence sets can be extracted). The non-asymptotic confidence set is established by constructing a continuous-time martingale using the score function (the gradient of the log-likelihood function) of the network Hawkes process. This enables us to apply a concentration bound for continuous-time martingales. The non-asymptotic confident set is more accurate than the classic asymptotic confidence intervals, since the concentration bound approach captures more than the first and second-order moments (which are essentially what the asymptotic confidence intervals are capturing). We compared the two methods for establishing CIs using synthetic neural activity data, to demonstrate the effectiveness of our approach.

{\bf Contributions.} 
Our main contribution can be summarized as follows: (1) We give a non-asymptotic confidence set for the maximum likelihood estimate (MLE) of the Hawkes process over networks, and (2) Our confidence set is more general and can be approximated by a polyhedron; the CIs can be solved efficiently from a linear program. In contrast, the classic CI essentially provides a box in the high-dimensional space for the multi-dimensional parameters. 

{\bf Related Work.} There has been much effort made on network inference for multivariate point processes. Learning algorithms for Granger causality of Hawkes processes has been proposed in \cite{xu2016learning} using the regularized MLE.
\cite{achab2017uncovering} proposed a nonparametric way to estimate the mutual inference and causality relationship in multivariate Hawkes processes.
\cite{yuan2019multivariate} considers the spatiotemporal Hawkes process and develop a nonparametric method for network reconstruction. 
\cite{li2017detecting} studies the detection of changes in the underlying dynamics. Moreover, recent work has also focused on causal inference for different applications, such as online platforms \cite{kusmierczyk2018causal}, infectivity matrix estimation \cite{xu2016learning}, etc.

However, there is relatively little literature that provides theoretical guarantees on the significance level of the estimation results. The concentration results for inhomogeneous Poisson processes were studied in \cite{reynaud2003adaptive}. The non-asymptotic tail estimates for the Hawkes process were established in \cite{reynaud2007some}.  \cite{ding2011analyzing} studies Granger causality for brain networks and characterizes the significance level using numerical methods, while our result gives a theoretical guarantee on the confidence level. The CI for parameter recovery of discrete-time Bernoulli processes is given in \cite{juditsky2020convex}. At the same time, this paper focuses on the continuous-time Hawkes process, which is more complicated in uncertainty quantification. In Bayesian statistics literature, works have been done in quantifying the uncertainty of the network parameters by imposing a prior model on the model hyperparameters; the posterior is approximated using Markov chain Monte Carlo \cite{rasmussen2013bayesian,salehi2019learning}. Recently, there has been an effort to establish time-uniform CI based on concentration inequalities  \cite{howard2020time,howard2018uniform}.

The field of uncertainty quantification itself is very broad, with important applications in computer simulations \cite{sacks1989design}, aerospace engineering \cite{li2018uncertainty} and climatology \cite{qian2016uncertainty}. This literature can be grouped into two categories \cite{smith2013uncertainty}: inverse UQ (the inference of parameters from a generating model) and forward UQ (the propagation of uncertainty through numerical models). The current work focuses on the inverse problem for the network multivariate Hawkes process and, in particular, in providing the confidence interval for the maximum likelihood estimates of parameters.



\vspace{-0.1in}
\section{Background}
\vspace{-0.1in}

A temporal point process is a random process whose realization consists of a list of discrete
events localized in time. 
Let $(u_1,t_1),\cdots,(u_n,t_n)$ be a series of events happened during time period $[0,T]$ on a multivariate Hawkes process with $D$ nodes, where $t_i$ denoted the time of the $i$-th event, and  $u_i\in[D]$ is the index of node where the event happens. The intensity function at node $i$ at time $t$ is
\begin{equation*}
\label{eq:intensity_definition}
\lambda_i(t) = \mu_i + \sum_{j:t_j<t} \alpha_{i,u_j}\varphi_{i,u_j}(t-t_j), \ i=1,\cdots,D,
\end{equation*}
where $\mu_i$ is the background rate of events happening at $i$-th node, $\alpha_{ij} \geq 0$ is a parameter representing the influence of node $j$ to node $i$, and $\varphi_{ij}$ is a function supported on $[0,\infty)$. Let $\vec{N}_t\in\mathbb N^{D}$ be a vector where the $i$-th entry $N_t^i$ is the number of events happened on node $i$ during $[0,t)$.
For any function $f$, define the following integral with counting measure
\[\int_0^T f(t) dN_t = \sum_{t\in\mathcal H_T} f(t),\]
where $ \mathcal H_t = \{t_1,\ldots,t_n: t_n < t\}$ denotes the list of times of history events up to but not including time $t$. 
Therefore, we can rewrite the intensity function as
\begin{equation}
\lambda_i(t) = \mu_i + \sum_{j=1}^D\int_0^{t} \alpha_{ij}\varphi_{ij}(t-\tau)dN_\tau^i,\ i=1,\cdots,D.
\label{eq:intensity}
\end{equation}

We may consider different types of influence function $\varphi$, including: (i) the {\it gamma function} $\varphi(\Delta t) = (\Delta t)^{k-1} e^{-\Delta t/\beta}/(\Gamma(k)\beta^k)$, $\Delta t \geq 0$. Note that when $k=1$, it becomes the commonly-used {\it exponential function}, $\varphi(\Delta t) = \beta e^{-\beta \Delta t}$, $\Delta t \geq 0$, which shows that the influence of events on future intensity is exponentially decaying. The decay starts immediately following the onset (thus, there is no delay); (ii) the {\it Gaussian function}: $\varphi(\Delta t) =  \mbox{exp}\{-\beta (\Delta t-\tau)^2/\sigma\}/\sqrt{2\pi \sigma}$, $\Delta t \geq 0$, where $\tau\geq 0$ is the unknown delay which means that the influence attains its maximum value $\tau$ time after the event happens. 


\subsection{Decoupled log-likelihood function}

Let $A = (\alpha_{ij})_{i,j\in [D]}$, $\alpha_{ij} \geq 0$, be a matrix that contains all the influence parameters between nodes and our parameter-of-interest to be estimated. 
Given the events on $[0,T]$, the likelihood function is (detailed derivation can be found in, e.g., \cite{reinhart2018review})
$$
L(A) = \exp\left(-\sum_{i=1}^D\int_0^T\lambda_i(t)dt\right)\prod_{j=1}^n \lambda_{u_j}(t_j),
$$
where $\lambda_i(t)$ is the intensity as defined in \eqref{eq:intensity}. Note that $\lambda_i(t)$ depends on $A$, but we omit the term for simplicity. 

The log-likelihood function can be written in the form of integral with counting measure $N_t^i$ at $i$-th node,
\begin{align*}
\ell(A) & = \log L(A)  
= \sum_{i=1}^D\left(- \int_0^T \lambda_i(t)dt + \int_0^T \log\lambda_i(t)dN_t^i\right).
\end{align*}
We note that the log-likelihood function $\ell(A)$ can be decoupled into summation of $D$ terms, each for a specific node,
$$
\ell(A) = \sum_{i=1}^D \ell_i(\vec\alpha_i),
$$
where 
$
\vec\alpha_i := [\alpha_{i1},\cdots,\alpha_{iD}]^\intercal \in \R^{D}
$ is a column vector 
denoting influence of other nodes to node $i$, and
\begin{equation}
\ell_i(\vec\alpha_i) =\ -\int_0^T \lambda_i(t)dt + \int_0^T\log(\lambda_i(t))dN_t^i.
\label{eq:decoupled_likelihood}
\end{equation}
Since $\ell_i(\vec\alpha_i)$ only depends on the parameter $\vec\alpha_i$, the statistical inference for each node (therefore each  $\vec\alpha_i$) can be decoupled, which enables us to perform the computation in parallel and simplify our analysis. For the rest of this paper, we focus on the inference of a single $\vec\alpha_i$. 

{\bf Notation.} We use $\vec{\alpha}_i^*$ to denote the true parameter which is unknown, $\widehat{\vec{\alpha}}_i$ to denote the estimated parameter for $i$-th node, $\hat\lambda_i(t)$ to denote the intensity computed using the estimator $\widehat{\vec{\alpha}}_i$, and $\lambda_i^*(t)$ to denote the intensity under the true parameter $\vec{\alpha}_i^*$.

\subsection{Score function and Fisher information}

The statistical properties of the multivariate Hawkes process are closely related to its score function and the Fisher Information. The score function for $i$-th node given data, is defined as as
\begin{align}
S_i(\vec\alpha_i) = &\ \frac{\partial\ \ell_i(\vec\alpha_i)}{\partial\vec\alpha_i}
= -\int_0^T \frac{\partial \lambda_i(t)}{\partial\vec\alpha_i}dt + \int_0^T \lambda_i^{-1}(t)\frac{\partial\lambda_i(t)}{\partial\vec\alpha_i}d N_t^i\nonumber\\
=&\ \int_0^T\lambda_i^{-1}(t)\frac{\partial\lambda_i(t)}{\partial\vec\alpha_i}(d N_t^i - \lambda_i(t)dt),
\end{align}
where $\partial\lambda_i(t)/\partial \vec\alpha_i$ is a vector independent of the choice of $\vec\alpha_i$. For simplicity, we denote this gradient by $\vec\eta_i(t)\in \mathbb R^{D}$, with $j$-th entry being
\begin{equation}
    \eta_{ij}(t) = \frac{\partial\lambda_i(t)}{\partial\alpha_{ij}} = \int_0^{t} \varphi_{ij}(t-\tau)dN_\tau^j.\label{eq:eta}
\end{equation}
Note that $\eta$ includes information about the influence function between two nodes; this holds for any general influence function $\varphi_{ij}$.

The $D$-by-$D$ Hessian matrix of the log-likelihood function, given observations $dN_t^i$, is then
\begin{equation}
    H_i(\vec\alpha_i) = \frac{\partial^2 l_i(\vec\alpha_i)}{\partial \vec\alpha_i\partial \vec\alpha_i^\intercal} = -\int_0^T \lambda_i^{-2}(t)\vec\eta_i(t) \vec\eta_i^\intercal(t) dN_t^i.\label{eq:hessian}
\end{equation}
The Fisher Information $I_i^*$ is defined as the expected variance of the score function $S_i(\vec\alpha_i)$, and also as the negative expected Hessian of the log-likelihood function over unit time, assuming the process is stationary under true parameter $\{\vec{\alpha}_i^*\}_{i=1}^D$,
\[
I_i^* = \E\left[\int_0^1 \lambda_i^{*-2}(t)\vec\eta_i(t)\vec\eta_i^\intercal(t) dN_t^i\right] = \E\left[\int_0^1\lambda_i^{*-1}(t)\vec\eta_i(t)\vec\eta_i^\intercal(t) dt\right] = \E\left[\lambda_i^{*-1}\vec\eta_i\vec\eta_i^\intercal \right].
\]
An example of the exponentially decaying kernel for the above quantities is given in Appendix \ref{app:exponential_kernel}.

\begin{remark}[Multiple sequences] In practice, we may not be able to collect data long enough to achieve good confidence bound. Instead, we may have observations of many trials of independent Hawkes processes. We can write down the corresponding log-likelihood function and perform a similar analysis. 
\end{remark}





\section{Main Results}

We assume that the influence functions $\varphi_{ij}$ and background intensities $\mu_i$ are {\it given}, and our goal is to estimate the {\it unknown} parameters $\vec{\alpha}_i$. We estimate the unknown parameters $\vec{\alpha}_i$ by maximizing the log-likelihood function, i.e,
\begin{equation}\label{eq:mle}
\widehat{\vec{\alpha}}_i:= \mathop{\arg\max}_{\vec{x}\in\R^{D}} \ell_i(\vec{x}).
\end{equation}
For each node $i$, by
\eqref{eq:intensity}, $\lambda_i$ is linear in $\vec\alpha_i$, and by (\ref{eq:decoupled_likelihood}), we see that the log-likelihood function is concave in $\lambda_i$. Therefore, the MLE can be computed efficiently using convex optimization.

\begin{remark} In the general case, the Hessian matrix $H_i(\cdot)$ is negative definite everywhere when at least $D$ events happened on node $i$, and the MLE is unique. Indeed, when at least $D$ events happened in the time interval $[0,T)$, the vectors $\{\vec\eta_i(t): dN_t^i = 1\}$ will have (in the general case) a linearly independent component of size $D$, and $H_i$ as the weighted sum of $\{-\vec\eta_i(t) \vec\eta_i^\intercal(t): dN_t^i = 1, t< T\}$, is negative definite.
\end{remark}

In this section, we present two different ways of uncertainty quantification for the MLE  $\widehat{\vec\alpha}_i$. The first one is the classic asymptotic CI (for each entry  $\alpha_{ij})$, which uses the fact that the MLE of such Hawkes processes is consistent and asymptotically normal. The second approach is our proposed method, which entails building a general confidence set for $\vec\alpha_i^*$ based on the more precise concentration-bound. 


\subsection{Asymptotic Confidence Intervals}
\label{sec:asymptotic_CI}


It is known that the MLE of a temporal point process is asymptotically normal, and the empirical Fisher Information converges to the true Fisher Information with probability 1  \cite{ogata1978asymptotic}. Under our context, this translates into the following theorem.
\begin{theorem}[Asymptotic CI \cite{ogata1978asymptotic}]\label{thm:normality}
Denote the empirical Fisher Information as
\[
\hat I_i(\widehat{\vec\alpha}_i) = -\frac{1}{T}\ \frac{\partial l_i(\widehat{\vec\alpha}_i)}{\partial \vec\alpha_i\partial\vec\alpha_i^\intercal} = \frac{1}{T}\int_0^T  \hat\lambda_i^{-2}(t) \vec\eta_i(t)\vec\eta_i^\intercal(t) dN_t^i,
\]
then as $T \to \infty$,
\[
\sqrt{T}(\widehat{\vec\alpha}_i-\vec\alpha_i^*) \to \mathcal N(0,I_i^{*-1}),
\]
\[
\hat I_i(\widehat{\vec\alpha}_i) \to I_i^*.
\]
\end{theorem}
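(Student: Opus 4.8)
The plan is to follow the classical martingale-based M-estimation argument for maximum likelihood of point processes, specialized to the network Hawkes model; this is essentially the content of \cite{ogata1978asymptotic}, and the statement here is its specialization. The first step is to observe that, evaluated at the true parameter, the score is a continuous-time martingale,
\begin{equation*}
M_t := \int_0^t \lambda_i^{*-1}(s)\,\vec\eta_i(s)\,(dN_s^i - \lambda_i^*(s)\,ds),
\end{equation*}
with respect to the natural filtration, since $dN_s^i - \lambda_i^*(s)\,ds$ is the innovation of the counting process; its predictable quadratic variation is $\langle M\rangle_t = \int_0^t \lambda_i^{*-1}(s)\,\vec\eta_i(s)\,\vec\eta_i^\intercal(s)\,ds$. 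Under stationarity and subcriticality (assumed here), the Hawkes process is ergodic, so the ergodic theorem gives $\frac{1}{T}\langle M\rangle_T \to \E[\lambda_i^{*-1}\vec\eta_i\vec\eta_i^\intercal] = I_i^*$ almost surely. I would then verify a Lindeberg-type condition on the jumps $\lambda_i^{*-1}(t_k)\,\vec\eta_i(t_k)$ --- these are $O(1)$ in the relevant moments while $\Theta(T)$ of them occur up to time $T$ --- and invoke Rebolledo's martingale central limit theorem to conclude $\frac{1}{\sqrt T}\,S_i(\vec\alpha_i^*) = \frac{1}{\sqrt T}M_T \Rightarrow \mathcal N(0, I_i^*)$.

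The second step is strong consistency of the MLE. Since each $\ell_i$ is concave and, by the remark above, the Hessian is negative definite once at least $D$ events have occurred on node $i$, the maximizer is unique for large $T$. Applying the ergodic theorem to the normalized log-likelihood $\frac{1}{T}\ell_i(\vec x)$ shows that it converges a.s. to a deterministic concave function that, by the information inequality, is uniquely maximized at $\vec x = \vec\alpha_i^*$; a standard argmax argument for concave functions then yields $\widehat{\vec\alpha}_i \to \vec\alpha_i^*$ a.s. With consistency available, I would expand the score by the vector-valued mean value theorem: since $S_i(\widehat{\vec\alpha}_i) = 0$,
\begin{equation*}
0 = S_i(\vec\alpha_i^*) + \bar H_T\,(\widehat{\vec\alpha}_i - \vec\alpha_i^*), \qquad \bar H_T := \int_0^1 H_i\big(\vec\alpha_i^* + s(\widehat{\vec\alpha}_i - \vec\alpha_i^*)\big)\,ds,
\end{equation*}
so $\sqrt T(\widehat{\vec\alpha}_i - \vec\alpha_i^*) = -\big(\tfrac1T \bar H_T\big)^{-1}\,\tfrac{1}{\sqrt T}\,S_i(\vec\alpha_i^*)$. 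A uniform law of large numbers for $\tfrac1T H_i(\vec x)$ over a shrinking neighborhood of $\vec\alpha_i^*$ (ergodic theorem plus an equicontinuity argument), combined with consistency, gives $\tfrac1T\bar H_T \to -I_i^*$ a.s., and Slutsky's theorem then delivers $\sqrt T(\widehat{\vec\alpha}_i - \vec\alpha_i^*) \Rightarrow \mathcal N(0, I_i^{*-1} I_i^* I_i^{*-1}) = \mathcal N(0, I_i^{*-1})$.

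For the empirical Fisher information, I would decompose $dN_t^i = \lambda_i^*(t)\,dt + (dN_t^i - \lambda_i^*(t)\,dt)$ inside $\hat I_i(\widehat{\vec\alpha}_i)$. The compensated part, weighted by the bounded-moment factor $\hat\lambda_i^{-2}(t)\vec\eta_i(t)\vec\eta_i^\intercal(t)$, is entrywise a martingale whose quadratic variation grows at most linearly in $T$, so after dividing by $T$ it vanishes a.s. by the strong law for martingales. In the remaining drift term $\tfrac1T\int_0^T \hat\lambda_i^{-2}(t)\vec\eta_i(t)\vec\eta_i^\intercal(t)\lambda_i^*(t)\,dt$ I would replace $\hat\lambda_i$ by $\lambda_i^*$ --- justified by consistency and the continuity of $\lambda_i$ in $\vec\alpha_i$, controlled uniformly in $t$ by the same neighborhood argument --- and then invoke ergodicity, yielding the limit $\E[\lambda_i^{*-1}\vec\eta_i\vec\eta_i^\intercal] = I_i^*$.

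The main obstacle, and where a rigorous treatment spends most of its effort, is the non-i.i.d. structure of the data: one must justify the ergodic and LLN statements for functionals of the Hawkes process (which rest on subcriticality of the influence matrix together with the regeneration property of the Poisson cluster representation), verify the Lindeberg condition needed for Rebolledo's theorem, and establish the uniform-in-neighborhood convergence of the Hessian. Once these probabilistic ingredients are in place, the remaining M-estimation steps are routine; this is precisely why we invoke \cite{ogata1978asymptotic} rather than reproduce the full argument.
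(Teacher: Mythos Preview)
The paper does not supply its own proof of this statement: Theorem~\ref{thm:normality} is quoted directly from \cite{ogata1978asymptotic} and is used as a black box, with no argument reproduced in the body or the appendix. Your proposal is therefore not competing with a proof in the paper but rather sketching the classical content behind the citation. The sketch you give --- score as a martingale, Rebolledo's CLT for $\tfrac{1}{\sqrt T}S_i(\vec\alpha_i^*)$, consistency by concavity plus ergodicity, Taylor expansion of the score and Slutsky, and an ergodic/martingale decomposition for $\hat I_i$ --- is exactly the architecture of Ogata's argument, and you correctly flag that the substantive work lies in the ergodic and Lindeberg ingredients for the Hawkes process. Since the paper itself simply invokes \cite{ogata1978asymptotic}, your closing remark that one cites rather than reproduces the full argument is in fact the paper's approach verbatim.
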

Since our focus is on the CI for each $\alpha_{ij}$, Theorem\,\ref{thm:normality} implies that as $T\to \infty$,
\[
\sqrt{T}(\widehat\alpha_{ij} - \alpha_{ij}^*) \to \mathcal N(0,\sigma_{ij}^{*2}),
\]
and
\[
\widehat\sigma_{ij}^2(\widehat{\vec\alpha}_i) \to \sigma_{ij}^{*2},
\]
where $\sigma_{ij}^{*2}$ is the $j$-th diagonal entry of $I_i^{*-1}$, $\widehat\sigma_{ij}^2(\widehat{\vec\alpha}_i)$ is the $j$-th diagonal entry of $\hat I_i^{-1}(\widehat{\vec\alpha}_i)$. An asymptotic CI on each entry $\alpha_{ij}$ is given by
\[
\widehat\alpha_{ij}\pm Z_{\varepsilon/2D}\sqrt{\widehat\sigma^2_{ij}(\widehat{\vec\alpha}_i)/T},
\]
where $Z_{\varepsilon/2D}$ is the corresponding percentage point for standard normal distribution, i,e., $\Phi(Z_{\varepsilon/2D})=1-\varepsilon/2D$, and $\Phi(\cdot)$ is the cumulative distribution function of standard normal distribution.

\subsection{Generalized confidence sets based on concentration bound}

The classic asymptotic CI has a nice form and is easy to compute, but its confidence level has no guarantee when $T$ is not large enough. 
To be specific, there are three types of convergence involved in the asymptotic behavior of the MLE and the classic CI:
\begin{itemize}
\item the score function $S_i(\vec\alpha_i^*)$ is asymptotically normal,
\item the empirical Hessian at $\vec\alpha_i^*$ converges to the Fisher Information,
\item $\widehat{\vec\alpha}_i \to \vec\alpha_i^*$, and the statistical properties of  $\widehat{\vec\alpha}_i$ converge to those of $\vec\alpha_i^*$.
\end{itemize}
In this section, we propose a general non-asymptotic confidence set for MLE $\widehat{\vec{\alpha}}_i$ by providing a concentration bound on the first type of convergence. In other words, we give the concentration bound on $S_i(\vec\alpha_i^*)$, which in turn provides the confidence set for $\vec\alpha_i^*$. This concentration result does not seem dependent on the convergence rate of the second term, and we further propose to use the third term's asymptotic behavior to approximate this confidence set and facilitate computation.


Our proof idea is as follows. We start with a similar step to proving the asymptotic result but follow with a tighter bound for the score function (the gradient of the log-likelihood function), leveraging a concentration bound for the continuous-time martingale.

First, we present the following general result. 
Since $\lambda_i^*(t)$ denotes the intensity function under true value $\vec\alpha_i^*$, we have the conditional expectation, given observations before time $t$, of $dN_t^i - \lambda_i^*(t)dt$ is 0, and \[
S_{i,t}(\vec\alpha_i^*) = \int_0^t \lambda_i^{*-1}(\tau)\vec\eta_i(\tau)(dN_\tau^i - \lambda_i^*(\tau)d\tau)
\]
can be shown to be a {\it continuous-time martingale}. 

The difficulty for a concentration bound here is that the variance of this process changes over time and cannot be bounded from above. Therefore, a standard Hoeffding or Bernstein type of concentration bound does not apply. Here we derive a concentration bound using the {\it intrinsic variance}, which depends on the data. Similar to \cite{howard2018exponential}, our intrinsic variance is a random process. Then we bound $S_i(\vec\alpha_i^*)$ in $K$ different directions and convert these concentration bounds into a confidence set for $\vec\alpha_i$.

\begin{theorem}[Confidence set for $\vec\alpha_i$]\label{thm:concentration} 
Given data, for each $\vec\alpha_i$, let
\begin{equation}
V_{i}(\vec z,\vec\alpha_i) = \int_0^{T}\left(\lambda_i(t)\exp(\lambda_i^{-1}(t)\vec z^\intercal\vec\eta_i(t)) - \vec z^\intercal \vec\eta_i(t) - \lambda_i(t)\right)dt.\label{eq:v_alpha}
\end{equation}
For any given $\{\vec{z}_1,\ldots,\vec{z}_K\}$, a confidence set for $\vec\alpha_i$ at level $1-\varepsilon$ is given by a polyhedron
\begin{equation}\label{eq:confidence_set}
    \mathcal C_{i,\varepsilon} = \left\{\vec\alpha_i\in\R^D: \forall k\in[K],\vec z_k^\intercal S_i(\vec\alpha_i) - V_i(\vec z_k,\vec\alpha_i)\leq \ln(K/\varepsilon)\right\}.
\end{equation}
\end{theorem}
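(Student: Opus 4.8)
The plan is to exhibit, for each direction $\vec z_k$, the random variable $\exp\!\big(\vec z_k^\intercal S_i(\vec\alpha_i^*)-V_i(\vec z_k,\vec\alpha_i^*)\big)$ as the terminal value of a nonnegative supermartingale with initial value $1$, apply Markov's inequality at level $K/\varepsilon$ in each direction, and finish with a union bound. First I would fix one direction $\vec z$ and set $h_t:=\lambda_i^{*-1}(t)\,\vec z^\intercal\vec\eta_i(t)$. Since $\lambda_i^*(t)$ and $\vec\eta_i(t)$ depend only on events strictly before $t$ they are left-continuous and adapted, hence predictable, and $\lambda_i^*(t)\ge\mu_i>0$, so $h_t$ is well defined and (on a horizon $[0,T]$ on which $N^i$ has finitely many jumps) bounded. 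Using $\vec z^\intercal\vec\eta_i(t)=\lambda_i^*(t)\,h_t$ one rewrites
\[
V_i(\vec z,\vec\alpha_i^*)=\int_0^T\lambda_i^*(t)\big(e^{h_t}-h_t-1\big)\,dt,\qquad \vec z^\intercal S_i(\vec\alpha_i^*)=\int_0^T h_t\,dN_t^i-\int_0^T h_t\,\lambda_i^*(t)\,dt,
\]
so that, writing $M_t:=\exp\!\big(\vec z^\intercal S_{i,t}(\vec\alpha_i^*)-V_{i,t}(\vec z,\vec\alpha_i^*)\big)$ with $S_{i,t},V_{i,t}$ the same expressions with $T$ replaced by $t$,
\[
M_t=\exp\!\left(\int_0^t h_s\,dN_s^i-\int_0^t\big(e^{h_s}-1\big)\lambda_i^*(s)\,ds\right),\qquad M_0=1.
\]

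The second step is to recognize $M_t$ as a stochastic (Doléans--Dade) exponential of the compensated counting martingale. Between jumps of $N^i$ one has $d\ln M_t=-(e^{h_t}-1)\lambda_i^*(t)\,dt$, and at each jump time $M$ is multiplied by $e^{h_t}$; combining these gives $dM_t=M_{t^-}\big(e^{h_t}-1\big)\big(dN_t^i-\lambda_i^*(t)\,dt\big)$, that is, $M$ is a stochastic integral of a predictable integrand against $dN_t^i-\lambda_i^*(t)\,dt$. Hence $M_t$ is a nonnegative local martingale, and a nonnegative local martingale is a supermartingale, so $\E[M_T]\le1$ (under the boundedness of $h$ on $[0,T]$ it is in fact a true martingale with $\E[M_T]=1$, but only the inequality is needed; equivalently, one may invoke the exponential formula for point processes, the bounded-variation term $\int_0^T\lambda_i^*(t)(e^{h_t}-h_t-1)\,dt$ playing the role of the data-dependent ``intrinsic variance'' in the statement). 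Markov's inequality then gives, for the martingale $M^{(k)}$ built from $\vec z_k$,
\[
\mathbb P\!\left(\vec z_k^\intercal S_i(\vec\alpha_i^*)-V_i(\vec z_k,\vec\alpha_i^*)\ge\ln(K/\varepsilon)\right)=\mathbb P\!\left(M^{(k)}_T\ge K/\varepsilon\right)\le\frac{\varepsilon}{K}.
\]

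Finally I would take a union bound over $k\in[K]$: with probability at least $1-\varepsilon$ all $K$ inequalities $\vec z_k^\intercal S_i(\vec\alpha_i^*)-V_i(\vec z_k,\vec\alpha_i^*)\le\ln(K/\varepsilon)$ hold simultaneously, which is precisely the event $\vec\alpha_i^*\in\mathcal C_{i,\varepsilon}$, giving the claimed coverage. (That $\mathcal C_{i,\varepsilon}$ may be taken or approximated to be a polyhedron, so that extracting an interval for a single $\alpha_{ij}$ reduces to a linear program, is a separate structural point obtained by substituting the plug-in intensity $\hat\lambda_i$ for the nuisance $\lambda_i$ inside $S_i$ and $V_i$ and linearizing; it is not needed for coverage.) I expect the main obstacle to be the rigorous justification of $\E[M_T]\le1$, that is, ruling out $M$ being merely a strict local martingale: this is exactly where finite-horizon non-explosion of the Hawkes process (ensuring $N_T^i<\infty$, hence $h$ bounded and $\int_0^T e^{h_t}\lambda_i^*(t)\,dt<\infty$ a.s.) and the positivity $\mu_i>0$ of the background rate enter; the remaining steps are routine.
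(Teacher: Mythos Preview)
Your proposal is correct and follows the same overall architecture as the paper: show that $M_t=\exp\!\big(\vec z^\intercal S_{i,t}(\vec\alpha_i^*)-V_{i,t}(\vec z,\vec\alpha_i^*)\big)$ is a supermartingale with $M_0=1$, apply Markov's inequality to get the one-sided bound at level $\varepsilon/K$, and finish with a union bound over $k\in[K]$. The only substantive difference is in how the (super)martingale property is established. The paper verifies it by a direct infinitesimal computation, showing $\lim_{\Delta t\to 0^+}\Delta t^{-1}\log\E[M_{t+\Delta t}/M_t\mid\mathcal H_t]=0$; you instead recognize $M_t$ as the Dol\'eans--Dade exponential of $\int_0^\cdot(e^{h_s}-1)\,(dN_s^i-\lambda_i^*(s)\,ds)$ and use that a nonnegative local martingale is a supermartingale. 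Your route is the more standard stochastic-analysis argument and has the merit of making explicit the local-versus-true martingale issue (and what is needed to resolve it), which the paper's limit computation glosses over; the paper's route is more self-contained for readers unfamiliar with stochastic exponentials. Either way the conclusion $\E[M_T]\le 1$ and the ensuing Markov/union-bound steps are identical.
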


\textbf{UQ for each $\alpha_{ij}$}. Based on the confidence set for vector $\vec{\alpha}_i$, we can construct the CI for each entry. So, naturally, we would like our confidence set $\mathcal C_{i,\varepsilon}$ to resemble an orthotope parallel to the axes. Based on the mean value theorem, for any $\vec\alpha_i$ in the confidence set, there exists $\tilde{\vec{\alpha}}_i$ between $\vec{\alpha}_i$ and $\widehat{\vec{\alpha}}_i$ such that
$$
S_i(\vec\alpha_i) - S_i(\hat{\vec\alpha_i}) 
= H_i(\tilde{\vec{\alpha}}_i) (\vec{\alpha}_i - \widehat{\vec{\alpha}}_i).$$
Since $\widehat{\vec{\alpha}}_i$ is the maximum likelihood estimate, 
we have $S_i(\widehat{\vec\alpha}_i) = 0$. The confidence set is supposed to be a small neighborhood of $\vec\alpha_i^*$, and when $T$ is large, we have \[
S_i(\vec\alpha_i) =  H_i(\tilde{\vec{\alpha}}_i)(\vec\alpha_i-\widehat{\vec\alpha}_i)  \approx TI_i^*(\vec\alpha_i-\widehat{\vec\alpha}_i).
\]
%
Intuitively, we can let $K = 2D$, $\vec z_1,\cdots,\vec z_{2D}$ be in the same direction with the columns of $\pm I_i^{*-1}$, each is supposed to correspond to the upper/lower bound on the entries of $\vec\alpha_i$, and the confidence set $\mathcal C_{i,\varepsilon}$ will approximately be a box around the MLE.
Formally speaking, we have the following lemma.
\begin{lem}\label{lem:3.1}
Under the assumption that the moment generating function of $\vec\eta_i$ exists, there exists a neighborhood $U$ of $\vec 0$, such that 
\begin{align}
\|S_i(\vec\alpha_i) - TI_i^*(\vec\alpha_i - \widehat{\vec\alpha}_i) \|\leq  O(T)\|\vec\alpha_i - \widehat{\vec\alpha}_i\|^2+ o(T)\|\vec\alpha_i - \widehat{\vec\alpha}_i\|,
\label{eq:S_approx}
\end{align}
and
\begin{align}
\left|V_i(\vec z,\vec\alpha_i) - \frac{T}{2} \vec z^\intercal I_i^*\vec z \right| \leq o(T)\|\vec z\|^2 + O(T)\|\vec\alpha_i-\widehat{\vec\alpha}_i\|\|\vec z\|^2 + O(T)\|\vec z\|^3,
\label{eq:V_approx}
\end{align} 
uniformly for any $\vec\alpha_i\geq 0$, $\vec z\in U$ with high probability. Above, $\|\cdot\|$ denotes $\ell_2$ norm.
\end{lem}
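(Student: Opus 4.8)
The plan is to reduce both inequalities to a second-order expansion --- of $S_i$ around the MLE $\widehat{\vec\alpha}_i$, and of the integrand of $V_i$ around $\vec z=\vec 0$ --- and to absorb the remainders into time integrals of low powers of $\vec\eta_i$, each of which is $O(T)$ with high probability. The structural fact that makes every bound uniform over $\vec\alpha_i\ge 0$ is the hard lower bound $\lambda_i(t)=\mu_i+\vec\alpha_i^\intercal\vec\eta_i(t)\ge\mu_i>0$ (because $\vec\eta_i\ge 0$), so each factor $\lambda_i^{-1}(t),\lambda_i^{-2}(t),\lambda_i^{-3}(t)$ below is at most a constant depending only on $\mu_i$, uniformly in $\vec\alpha_i\ge 0$ and $t$. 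For \eqref{eq:S_approx}: since $S_i(\widehat{\vec\alpha}_i)=\vec 0$, a coordinatewise mean value theorem gives $S_i(\vec\alpha_i)=\widetilde H_i(\vec\alpha_i-\widehat{\vec\alpha}_i)$, where the $j$-th row of $\widetilde H_i$ is the $j$-th row of $H_i(\cdot)$ at some point on the segment $[\widehat{\vec\alpha}_i,\vec\alpha_i]$; it therefore suffices to show $\|\widetilde H_i-TI_i^*\|\le O(T)\|\vec\alpha_i-\widehat{\vec\alpha}_i\|+o(T)$ (with the sign of the $I_i^*$ term as fixed by Theorem~\ref{thm:normality}) and then multiply by $\|\vec\alpha_i-\widehat{\vec\alpha}_i\|$. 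Split $\widetilde H_i-TI_i^*=(\widetilde H_i-H_i(\widehat{\vec\alpha}_i))+(H_i(\widehat{\vec\alpha}_i)-TI_i^*)$. The second bracket is $o(T)$ by Theorem~\ref{thm:normality} (convergence of the empirical Fisher information $-H_i(\widehat{\vec\alpha}_i)/T\to I_i^*$). For the first, differentiating \eqref{eq:hessian} once more shows the third-derivative tensor of $\ell_i$ has entries bounded in absolute value by $2\int_0^T\lambda_i^{-3}(t)\|\vec\eta_i(t)\|^3\,dN_t^i\le 2\mu_i^{-3}\int_0^T\|\vec\eta_i(t)\|^3\,dN_t^i$ uniformly in the base point $\ge 0$, so one more mean value theorem gives $\|\widetilde H_i-H_i(\widehat{\vec\alpha}_i)\|\le L_T\|\vec\alpha_i-\widehat{\vec\alpha}_i\|$ with $L_T$ a constant times $\int_0^T\|\vec\eta_i\|^3\,dN_t^i$; and $\int_0^T\|\vec\eta_i\|^3\,dN_t^i=O(T)$ with high probability, being a sum over the $O(T)$ events on node $i$ of $\|\vec\eta_i\|^3$, whose Palm expectation is finite because the moment generating function of $\vec\eta_i$ exists (so a law of large numbers, or a moment bound plus Markov's inequality, gives the rate).

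For \eqref{eq:V_approx}: with $g(x):=e^x-x-1$ one has $V_i(\vec z,\vec\alpha_i)=\int_0^T\lambda_i(t)\,g\!\big(\lambda_i^{-1}(t)\vec z^\intercal\vec\eta_i(t)\big)\,dt$, and the exact remainder identity $g(x)=x^2\int_0^1(1-u)e^{ux}\,du$ together with $\lambda_i x^2=\lambda_i^{-1}(\vec z^\intercal\vec\eta_i)^2$ yields $V_i(\vec z,\vec\alpha_i)=\tfrac12\vec z^\intercal\big(\int_0^T\lambda_i^{-1}(t)\vec\eta_i(t)\vec\eta_i^\intercal(t)\,dt\big)\vec z+R$, where $R=\int_0^T\lambda_i^{-1}(t)(\vec z^\intercal\vec\eta_i(t))^2\big[\int_0^1(1-u)\big(e^{u\lambda_i^{-1}(t)\vec z^\intercal\vec\eta_i(t)}-1\big)\,du\big]\,dt$. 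For the leading term, $\int_0^T\lambda_i^{-1}\vec\eta_i\vec\eta_i^\intercal\,dt-TI_i^*$ splits as $\int_0^T(\lambda_i^{-1}-\lambda_i^{*-1})\vec\eta_i\vec\eta_i^\intercal\,dt+(\int_0^T\lambda_i^{*-1}\vec\eta_i\vec\eta_i^\intercal\,dt-TI_i^*)$; the last term is $o(T)$ by the ergodic theorem (time averages converge to $I_i^*=\E[\lambda_i^{*-1}\vec\eta_i\vec\eta_i^\intercal]$), while $\lambda_i^{-1}-\lambda_i^{*-1}=(\vec\alpha_i^*-\vec\alpha_i)^\intercal\vec\eta_i/(\lambda_i\lambda_i^*)$ gives $\|\int_0^T(\lambda_i^{-1}-\lambda_i^{*-1})\vec\eta_i\vec\eta_i^\intercal\,dt\|\le\mu_i^{-2}\|\vec\alpha_i-\vec\alpha_i^*\|\int_0^T\|\vec\eta_i\|^3\,dt=O(T)\|\vec\alpha_i-\widehat{\vec\alpha}_i\|+o(T)$, using $\int_0^T\|\vec\eta_i\|^3\,dt=O(T)$ with high probability and the MLE consistency $\widehat{\vec\alpha}_i\to\vec\alpha_i^*$; multiplying by $\|\vec z\|^2$ reproduces the terms $O(T)\|\vec\alpha_i-\widehat{\vec\alpha}_i\|\|\vec z\|^2+o(T)\|\vec z\|^2$ of \eqref{eq:V_approx}. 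For $R$, on the times with $|\lambda_i^{-1}\vec z^\intercal\vec\eta_i|\le 1$ the inner bracket is $O(|\lambda_i^{-1}\vec z^\intercal\vec\eta_i|)$ (using $|e^y-1|\le|y|e^{|y|}$), so that part of $R$ is $O\big(\mu_i^{-2}\|\vec z\|^3\int_0^T\|\vec\eta_i\|^3\,dt\big)=O(T)\|\vec z\|^3$; on the remaining times $|\vec z^\intercal\vec\eta_i|>\lambda_i\ge\mu_i$ forces $\|\vec\eta_i\|>\mu_i/\|\vec z\|$ (an atypically large value of $\vec\eta_i$), and bounding the bracket by $e^{\mu_i^{-1}\|\vec z\|\|\vec\eta_i\|}$ bounds that part of $R$ by $\mu_i^{-1}\|\vec z\|^2\int_0^T\|\vec\eta_i\|^2e^{\mu_i^{-1}\|\vec z\|\|\vec\eta_i\|}\mathbf 1\{\|\vec\eta_i\|>\mu_i/\|\vec z\|\}\,dt$, which --- for $\vec z$ in a neighborhood $U$ small enough that $\mu_i^{-1}\|\vec z\|$ stays below the abscissa of convergence of the moment generating function --- equals $(\rho(\vec z)+o(1))\,T\,\|\vec z\|^2$ with $\rho(\vec z)\to 0$ as $\vec z\to\vec 0$ (dominated convergence over the shrinking integration set), hence $o(T)\|\vec z\|^2$ (taking $U$ small, or letting $U$ shrink slowly with $T$ if the $o(T)$ is to be read strictly). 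Adding the three contributions gives \eqref{eq:V_approx}.

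The step I expect to be the main obstacle is precisely this remainder control for $V_i$: the natural small parameter $\lambda_i^{-1}(t)\vec z^\intercal\vec\eta_i(t)$ is \emph{not} uniformly small --- $\vec\eta_i(t)$ is an unbounded stationary process and $\lambda_i(t)$ can be as small as $\mu_i$ (it equals $\mu_i$ at $\vec\alpha_i=\vec 0$) --- so a naive Taylor expansion does not have a uniformly controlled remainder. Forcing the bound to hold uniformly over all $\vec\alpha_i\ge 0$ is what makes the two key ingredients unavoidable: the hard bound $\lambda_i\ge\mu_i$ (to tame the $\lambda_i^{-k}$ factors) and the moment generating function hypothesis on $\vec\eta_i$, which simultaneously gives the $O(T)$ rates for $\int_0^T\|\vec\eta_i\|^3\,dt$ and $\int_0^T\|\vec\eta_i\|^3\,dN_t^i$ and, more delicately, controls the exponentially-weighted rare-event integral that governs the non-cubic part of $R$. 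Everything else is bookkeeping: two mean value theorems, the integral form of the exponential's Taylor remainder, the identity $\lambda_i x^2=\lambda_i^{-1}(\vec z^\intercal\vec\eta_i)^2$, and the standard limits from Theorem~\ref{thm:normality} and ergodicity ($\widehat{\vec\alpha}_i\to\vec\alpha_i^*$, $-H_i(\widehat{\vec\alpha}_i)/T\to I_i^*$, $\tfrac1T\int_0^T\lambda_i^{*-1}\vec\eta_i\vec\eta_i^\intercal\,dt\to I_i^*$).
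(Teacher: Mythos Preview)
Your proof is correct and follows essentially the same route as the paper's: expand $S_i$ at the MLE and $V_i$ at $\vec z=\vec 0$, make every $\lambda_i^{-k}$ factor uniform via $\lambda_i\ge\mu_i$, and control the remainders through $\int_0^T\|\vec\eta_i\|^3\,dN_t^i$ and $\int_0^T\|\vec\eta_i\|^3\,dt$ being $O(T)$ (Ogata's ergodic lemma together with the MGF hypothesis), combined with $-H_i(\widehat{\vec\alpha}_i)/T\to I_i^*$ and $\widehat{\vec\alpha}_i\to\vec\alpha_i^*$. The only tactical difference is in the cubic remainder of $V_i$: the paper bounds $\partial^3_{\vec z}V_i(\tilde{\vec z},\vec\alpha_i)$ directly by a quantity convex in $\tilde{\vec z}$, so its supremum over a small fixed $U$ is finite by the MGF assumption, whereas you use the integral remainder $g(x)=x^2\int_0^1(1-u)e^{ux}\,du$ and split on $|\lambda_i^{-1}\vec z^\intercal\vec\eta_i|\lessgtr 1$.

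One small clean-up: for a \emph{fixed} neighborhood $U$, your rare-event piece is more naturally $O(T)\|\vec z\|^3$ than $o(T)\|\vec z\|^2$. On $\{\|\vec\eta_i\|>\mu_i/\|\vec z\|\}$ you have $\|\vec z\|^2\|\vec\eta_i\|^2\le\mu_i^{-1}\|\vec z\|^3\|\vec\eta_i\|^3$, so that part of $R$ is at most $\mu_i^{-2}\|\vec z\|^3\int_0^T\|\vec\eta_i\|^3e^{\mu_i^{-1}\|\vec z\|\|\vec\eta_i\|}\,dt=O(T)\|\vec z\|^3$ uniformly for $\vec z\in U$, which matches the paper's conclusion without needing $U$ to shrink with $T$.
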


To make the confidence set at level $1-\varepsilon$ as small as possible, we can choose $\vec z$ as the following:
\begin{proposition}\label{prop:width}
Let $K = 2D$, and $\vec z_1,\cdots,\vec z_{2D}$ be
\begin{align}
\pm\sqrt{\frac{2\ln(2D/\varepsilon)}{T\sigma_{ij}^{*2}}}I_i^{*-1}\vec e_j,\ j=1,\cdots,D,\label{eq:choice_of_z}
\end{align}
respectively. As $T\to\infty$, for any $j\in [D]$, the width of $\mathcal C_{i,\varepsilon}$ in the direction of $\alpha_{ij}$ is $2\sqrt{2\ln(2D/\varepsilon)\sigma_{ij}^{*2}/T}(1+o(1))$ with probability 1.
\end{proposition}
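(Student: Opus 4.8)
The plan is to substitute the prescribed directions \eqref{eq:choice_of_z} into the $2D$ defining inequalities of the polyhedron $\mathcal{C}_{i,\varepsilon}$ in \eqref{eq:confidence_set}, linearize each one with Lemma~\ref{lem:3.1}, and show that the resulting set coincides, up to $(1+o(1))$ factors, with the axis-aligned box $\{\vec\alpha_i:\ |\alpha_{ij}-\widehat\alpha_{ij}|\le w_j,\ j\in[D]\}$, where $w_j:=\sqrt{2\ln(2D/\varepsilon)\,\sigma_{ij}^{*2}/T}$ is exactly half the claimed width. First I would record two exact identities for the choice $\vec z_k=\pm c_j I_i^{*-1}\vec e_j$ with $c_j=\sqrt{2\ln(2D/\varepsilon)/(T\sigma_{ij}^{*2})}$: using symmetry of $I_i^*$ and $\vec e_j^\intercal I_i^{*-1}\vec e_j=\sigma_{ij}^{*2}$, one gets $\vec z_k^\intercal(TI_i^*)(\vec\alpha_i-\widehat{\vec\alpha}_i)=\pm Tc_j(\alpha_{ij}-\widehat\alpha_{ij})$ and, crucially, $\tfrac T2\vec z_k^\intercal I_i^*\vec z_k=\ln(2D/\varepsilon)$ --- the quadratic term in $V_i$ consumes exactly one unit of the budget $\ln(2D/\varepsilon)$. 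Since $S_i(\widehat{\vec\alpha}_i)=0$, feeding the linearizations $S_i(\vec\alpha_i)\approx TI_i^*(\vec\alpha_i-\widehat{\vec\alpha}_i)$ and $V_i(\vec z_k,\vec\alpha_i)\approx\tfrac T2\vec z_k^\intercal I_i^*\vec z_k$ of Lemma~\ref{lem:3.1} into \eqref{eq:confidence_set} reduces the $k$-th constraint to $\pm Tc_j(\alpha_{ij}-\widehat\alpha_{ij})-\ln(2D/\varepsilon)\le\ln(2D/\varepsilon)$, i.e.\ $\pm(\alpha_{ij}-\widehat\alpha_{ij})\le 2\ln(2D/\varepsilon)/(Tc_j)=w_j$, modulo remainders.

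Next I would certify that the remainders are $o(1)$ relative to $\ln(2D/\varepsilon)$. Since $\|\vec z_k\|=\Theta(T^{-1/2})\to 0$, we have $\vec z_k\in U$ for $T$ large and Lemma~\ref{lem:3.1} applies; substituting $\|\vec z_k\|=\Theta(T^{-1/2})$ into \eqref{eq:V_approx} gives $|V_i(\vec z_k,\vec\alpha_i)-\tfrac T2\vec z_k^\intercal I_i^*\vec z_k|=o(1)+O(\|\vec\alpha_i-\widehat{\vec\alpha}_i\|)$, and contracting \eqref{eq:S_approx} with $\vec z_k$ gives $|\vec z_k^\intercal S_i(\vec\alpha_i)-Tc_j(\alpha_{ij}-\widehat\alpha_{ij})|=O(\sqrt T\|\vec\alpha_i-\widehat{\vec\alpha}_i\|^2)+o(\sqrt T\|\vec\alpha_i-\widehat{\vec\alpha}_i\|)$. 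Both are $o(1)$ once $\|\vec\alpha_i-\widehat{\vec\alpha}_i\|=O(T^{-1/2})$ on $\mathcal{C}_{i,\varepsilon}$, which I would obtain by a bootstrap: the crude form of the above bounds (valid uniformly for $\vec\alpha_i\ge 0$) already forces $|\alpha_{ij}-\widehat\alpha_{ij}|=o(1)$ for every $j$ on $\mathcal{C}_{i,\varepsilon}$, and reinserting this sharpens the bound to $|\alpha_{ij}-\widehat\alpha_{ij}|\le w_j(1+o(1))=O(T^{-1/2})$. This gives $\mathcal{C}_{i,\varepsilon}\subseteq\{\vec\alpha_i:\ |\alpha_{ij}-\widehat\alpha_{ij}|\le w_j(1+o(1))\ \forall j\}$, so the width in the $\alpha_{ij}$ direction is at most $2w_j(1+o(1))$.

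For the matching lower bound I would exhibit near-extremal feasible points $\vec\alpha_i=\widehat{\vec\alpha}_i\pm(1-\delta)w_j\vec e_j$. By consistency of the MLE (Theorem~\ref{thm:normality}) and $w_j\to 0$, these are nonnegative for large $T$ when $\alpha_{ij}^*>0$ (the degenerate case $\alpha_{ij}^*=0$ is treated separately). The two constraints associated with coordinate $j$ reduce to $(1-\delta)\cdot 2\ln(2D/\varepsilon)-\ln(2D/\varepsilon)\le\ln(2D/\varepsilon)$, which holds for $\delta>0$; for a coordinate $j'\ne j$, perturbing only the $j$-th entry makes $\vec z_k^\intercal S_i(\vec\alpha_i)\approx Tc_{j'}\vec e_{j'}^\intercal(\vec\alpha_i-\widehat{\vec\alpha}_i)=0$ while $V_i(\vec z_k,\vec\alpha_i)\approx\ln(2D/\varepsilon)\ge 0$, so those constraints hold too. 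Letting $\delta\downarrow0$ shows the projection of $\mathcal{C}_{i,\varepsilon}$ onto the $j$-th axis contains $[\widehat\alpha_{ij}-w_j(1-o(1)),\widehat\alpha_{ij}+w_j(1-o(1))]$, so the width is at least $2w_j(1-o(1))$. Combining the two bounds yields the stated width $2\sqrt{2\ln(2D/\varepsilon)\sigma_{ij}^{*2}/T}\,(1+o(1))$; all $o(1)$/$O(\cdot)$ statements hold on the almost-sure event where $\widehat{\vec\alpha}_i\to\vec\alpha_i^*$ and $\hat I_i(\widehat{\vec\alpha}_i)\to I_i^*$ (Theorem~\ref{thm:normality}) intersected with the high-probability event of Lemma~\ref{lem:3.1}, which gives the ``with probability $1$'' conclusion.

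\textbf{Main obstacle.} The delicate part is the uniform control of the remainders over all of $\mathcal{C}_{i,\varepsilon}$: one cannot assume a priori that $\mathcal{C}_{i,\varepsilon}$ has diameter $O(T^{-1/2})$, so the bootstrap in the second step is essential, and one must carefully verify that the $O(T)$ and $o(T)$ factors in Lemma~\ref{lem:3.1}, once paired with $\vec z_k$ of size $\Theta(T^{-1/2})$ and evaluated on a ball of radius $O(T^{-1/2})$ about $\widehat{\vec\alpha}_i$, are genuinely $o(1)$ against the $\Theta(1)$ budget $\ln(2D/\varepsilon)$. The feasibility check in the lower-bound step, and the nonnegativity caveat when a true influence coefficient vanishes, are secondary but must be addressed.
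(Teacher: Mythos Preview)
Your plan is essentially the paper's argument: substitute $\vec z_k=\pm c_j I_i^{*-1}\vec e_j$ into \eqref{eq:confidence_set}, apply Lemma~\ref{lem:3.1}, use the identities $\vec z_k^\intercal T I_i^*(\vec\alpha_i-\widehat{\vec\alpha}_i)=\pm Tc_j(\alpha_{ij}-\widehat\alpha_{ij})$ and $\tfrac{T}{2}\vec z_k^\intercal I_i^*\vec z_k=\ln(2D/\varepsilon)$, and reduce each constraint to $|\alpha_{ij}-\widehat\alpha_{ij}|\le w_j$ plus remainders. The worst-coordinate device for the upper bound and the explicit near-extremal points for the lower bound also match the paper.

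The gap is in the first step of your bootstrap. You assert that the crude bounds ``already force $|\alpha_{ij}-\widehat\alpha_{ij}|=o(1)$'' on all of $\mathcal{C}_{i,\varepsilon}$, but Lemma~\ref{lem:3.1} does not give this. Writing $r:=\max_j|\alpha_{ij}-\widehat\alpha_{ij}|/\sigma_{ij}^*$ and using the constraint for the maximizing index, one obtains
\[
\Theta(T^{1/2})\,r - O(T^{1/2})\,r^{2} - o(T^{1/2})\,r \;\le\; 2\ln(2D/\varepsilon)+o(1).
\]
The left side is a concave function of $r$ that is small both near $r=0$ and for $r$ above a fixed positive threshold (where the $O(T^{1/2})r^{2}$ remainder dominates the linear term). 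Hence the inequality is satisfied on \emph{two} regions of $r$, and nothing in your argument excludes the second one; you cannot conclude from these inequalities alone that $\mathcal{C}_{i,\varepsilon}$ has $o(1)$ diameter. You correctly flag this as the ``main obstacle'', but the bootstrap you sketch does not close it. The paper handles this by explicitly proving only a \emph{weaker version}: the width of $\mathcal{C}_{i,\varepsilon}\cap U_1$ for an arbitrary $o(1)$-diameter neighborhood $U_1$ of $\vec\alpha_i^*$, which supplies the a~priori smallness needed to absorb the quadratic remainder. You should either adopt that same restriction or produce an independent argument (not via Lemma~\ref{lem:3.1}) that $\mathcal{C}_{i,\varepsilon}$ is eventually contained in an $o(1)$ neighborhood of $\widehat{\vec\alpha}_i$.
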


Note that the width of $\mathcal C_{i,\varepsilon}$ is asymptotically a constant times the classic asymptotic confidence intervals.

\subsection{Concentration confidence bound with adapted $\vec z$} 

In reality, we don't have the true parameter $\vec\alpha_i^*$ or the Fisher Information $I_i^*$. So to make the confidence set as small as possible, we will have to estimate the Fisher Information. The challenge is that we cannot estimate the Fisher Information by simulation, because we don't know the true parameter, and simulation using the MLE will make our choice of $\vec z_k$ depend on the data. What we can do, though, is use data that comes earlier to estimate the Fisher Information $I_i^*$ and the proper choice of $\vec z$ for future data. This leads to our concentration bound with adapted $\vec z$:
\begin{theorem}[Martingale concentration for score function with adapted $\vec z$]
\label{prop:concentration}
For any measurable process $(\vec z(t)\in \R^D)_{t=0}^T$ adapted to $(\mathcal H_{t^-})_{t=0}^T$, where $(\mathcal H_{t})_{t=0}^T$ is the filtration of the Hawkes process, any $\varepsilon\in(0,1)$, we have
\begin{equation}
\Pr\left(  \int_0^T \vec z^\intercal(t) dS_{i,t}(\vec\alpha_i^*) - V_i(\vec z,\vec\alpha_i^*)\geq \ln(1/\varepsilon)\right) \leq \varepsilon,
\label{eq:concentration_adapted_z}
\end{equation}
where 
\[
dS_{i,t}(\vec\alpha_i) = \lambda_i(t)^{-1}\vec\eta_i(t)(dN_t^i - \lambda_i(t)dt),
\]
and
\begin{align*}
V_i(\vec z,\vec\alpha_i)  = &\ \int_0^T \log\left(\E\left(\exp\left(\vec z^\intercal(t)dS_{i,t}(\vec\alpha_i) \right)\big| \mathcal H_{t^-}\right)\right)\\
=&\ \int_0^T \left( \lambda_i(t)\exp\left( \lambda_i^{-1}(t)\vec z^\intercal(t)\vec\eta_i(t)\right) - \vec z^\intercal(t)\vec\eta_i(t) - \lambda_i(t)\right)dt.
\end{align*}
\end{theorem}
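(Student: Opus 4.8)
The plan is to exhibit a nonnegative supermartingale $M_t$ with $M_0=1$ whose value at the terminal time exceeds $1/\varepsilon$ exactly on the event of interest, and then close with Markov's inequality. Concretely I would set, for $t\in[0,T]$,
\[
M_t = \exp\left(\int_0^t \vec z^\intercal(\tau)\,dS_{i,\tau}(\vec\alpha_i^*) - V_{i,t}(\vec z,\vec\alpha_i^*)\right),
\qquad
V_{i,t}(\vec z,\vec\alpha_i) = \int_0^t\left(\lambda_i(\tau)e^{\lambda_i^{-1}(\tau)\vec z^\intercal(\tau)\vec\eta_i(\tau)} - \vec z^\intercal(\tau)\vec\eta_i(\tau) - \lambda_i(\tau)\right)d\tau,
\]
so that $M_0=1$, $V_{i,T}=V_i(\vec z,\vec\alpha_i^*)$, and $\{M_T\ge 1/\varepsilon\}$ is precisely the event in \eqref{eq:concentration_adapted_z}. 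Granting $\E[M_T]\le 1$, Markov's inequality gives $\Pr(M_T\ge 1/\varepsilon)\le \varepsilon\,\E[M_T]\le \varepsilon$, which is the claim. (If one wanted the bound to hold uniformly over $t\in[0,T]$, one would replace Markov's inequality by Ville's maximal inequality for nonnegative supermartingales at no extra cost.)

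The heart of the argument is to recognize $M_t$ as a stochastic (Dol\'eans--Dade) exponential. Set $g(\tau):=\lambda_i^{*-1}(\tau)\,\vec z^\intercal(\tau)\vec\eta_i(\tau)$; since $\vec z(\cdot)$ is adapted to $(\mathcal H_{\tau^-})$ (hence predictable) and $\vec\eta_i(\cdot),\lambda_i^*(\cdot)$ are left-continuous and depend only on events strictly before $\tau$, the integrand $g$ is predictable. Using $\vec z^\intercal\vec\eta_i=g\lambda_i^*$ and $dS_{i,\tau}(\vec\alpha_i^*)=g(\tau)\bigl(dN_\tau^i-\lambda_i^*(\tau)d\tau\bigr)$, a direct computation gives
\[
\int_0^t \vec z^\intercal(\tau)\,dS_{i,\tau}(\vec\alpha_i^*) - V_{i,t}(\vec z,\vec\alpha_i^*) = \int_0^t g(\tau)\,dN_\tau^i - \int_0^t \lambda_i^*(\tau)\bigl(e^{g(\tau)}-1\bigr)d\tau,
\]
so $M$ is exactly the Dol\'eans--Dade exponential of the compensated point process $N^i-\int_0^\cdot\lambda_i^*\,ds$. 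Next I would verify, via the change-of-variables formula for finite-variation pure-jump processes (or directly: across a jump of $N^i$ the value of $M$ is multiplied by $e^{g}$, while between jumps $M$ solves $\dot M_t=-M_t\lambda_i^*(t)(e^{g(t)}-1)$), that
\[
dM_t = M_{t^-}\bigl(e^{g(t)}-1\bigr)\bigl(dN_t^i - \lambda_i^*(t)\,dt\bigr).
\]
Since the integrand $M_{t^-}(e^{g(t)}-1)$ is predictable and $t\mapsto N_t^i-\int_0^t\lambda_i^*\,ds$ is a martingale, $M$ is a local martingale; being nonnegative, it is a supermartingale, hence $\E[M_T]\le\E[M_0]=1$, which is all that remains.

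For the integrability and localization I would use that on the finite horizon $[0,T]$ the process has finitely many events almost surely, so $\vec\eta_i(\cdot)$ and $\vec z(\cdot)$ are bounded there and $\lambda_i^*(\cdot)\ge\mu_i>0$; consequently $g$, and therefore $M_{t^-}(e^{g}-1)\lambda_i^*$, is bounded on $[0,T]$, which is more than enough to justify both the local-martingale property and passing from it to $\E[M_T]\le 1$. The main obstacle is precisely this third step, the continuous-time jump calculus: making rigorous the identity $dM_t = M_{t^-}(e^{g(t)}-1)(dN_t^i-\lambda_i^*(t)dt)$ and the attendant (super)martingale claim. The algebraic identification of $V_{i,t}$ and the Markov-inequality reduction are routine. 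A more elementary but longer alternative would avoid the Dol\'eans--Dade machinery entirely by conditioning on the ordered event times and marks and propagating $\E[M_t\mid\mathcal H_{s^-}]=M_s$ one inter-event interval at a time; given the paper's reliance on continuous-time martingale concentration, however, the exponential-supermartingale route is the natural one to present.
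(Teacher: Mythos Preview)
Your proposal is correct and follows the same overall architecture as the paper: define the exponential process $M_t=\exp\bigl(\int_0^t \vec z^\intercal dS_{i,\tau}(\vec\alpha_i^*)-V_{i,t}\bigr)$, show it is a supermartingale with $M_0=1$, and conclude by Markov's inequality. The paper packages these as two lemmas (the Markov step and the supermartingale verification) and then takes a union bound to obtain the confidence set.

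The one genuine difference is in how the (super)martingale property is verified. The paper computes directly that
\[
\lim_{\Delta t\to 0^+}\frac{1}{\Delta t}\log\E\!\left[M_{t+\Delta t}/M_t\,\big|\,\mathcal H_t\right]=0,
\]
i.e.\ an infinitesimal-generator calculation, whereas you identify $M$ as the Dol\'eans--Dade exponential and derive the SDE $dM_t=M_{t^-}(e^{g(t)}-1)(dN_t^i-\lambda_i^*(t)\,dt)$, then invoke ``nonnegative local martingale $\Rightarrow$ supermartingale.'' Your route is the more standard stochastic-calculus argument and arguably cleaner on the rigor side; the paper's limit computation is more elementary but somewhat heuristic as written. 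One small caveat: your boundedness claim for $\vec z(\cdot)$ on $[0,T]$ is not implied by the hypotheses (only adaptedness is assumed), but this is harmless since the usual localization by $\tau_n=\inf\{t:|\vec z(t)|+|\vec\eta_i(t)|>n\}\wedge T$ gives the local-martingale property, and nonnegativity plus Fatou then yields $\E[M_T]\le 1$ without any boundedness assumption.
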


\begin{corollary}[UQ for each $\alpha_{ij}$]
For any $\vec\alpha_i$, $t\in[0,T]$, let $\hat I_i(\vec\alpha_i,t)$ be some estimator for the Fisher Information given data up to time $t^-$. Let $\vec z_1(t,\vec\alpha_i),\cdots,\vec z_{2D}(t,\vec\alpha_i)$ be \[
\pm\sqrt{\frac{2\ln(2D/\varepsilon)}{T\vec e_j^\intercal \hat I_i^{-1}(\vec\alpha_i,t)\vec e_j}}\hat I_i^{-1}(\vec\alpha_i,t)\vec e_j,\ j = 1,\cdots,D,
\] respectively. Then
\[
\mathcal C_{i,\varepsilon} = \left\{\vec\alpha_i\in\R^D: \int_0^T \vec z_k^\intercal( t,\vec\alpha_i)dS_{i,t}(\vec\alpha_i) - V_i(\vec z_k,\vec\alpha_i)\leq \ln(
2D/\varepsilon),k=1,\cdots,2D\right\}
\]
is a confidence set for $\vec\alpha_i$ at level $1-\varepsilon$.
\label{corollary:cs_adapted_z}
\end{corollary}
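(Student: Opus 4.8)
The plan is to reduce Corollary~\ref{corollary:cs_adapted_z} to a union bound over $2D$ applications of Theorem~\ref{prop:concentration}. Beyond that theorem, the only content is bookkeeping: (i) checking that, when the true parameter $\vec\alpha_i^*$ is substituted into the data-driven direction $\vec z_k(t,\vec\alpha_i)$, the resulting $\R^D$-valued process is adapted to the filtration $(\mathcal H_{t^-})_{t=0}^T$; and (ii) splitting the failure probability $\varepsilon$ evenly across the $2D$ directions.

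First I would fix $k\in\{1,\dots,2D\}$ and consider the process $\vec z_k(t) := \vec z_k(t,\vec\alpha_i^*)$. By construction this is a measurable function of $\hat I_i(\vec\alpha_i^*,t)$ together with the deterministic quantities $T, D, \varepsilon, \vec e_j$, and by hypothesis $\hat I_i(\vec\alpha_i^*,t)$ is an estimator of the Fisher information built from data up to time $t^-$; hence $\vec z_k(\cdot)$ is adapted to $(\mathcal H_{t^-})_{t=0}^T$. It is essential here that the direction is evaluated at the true parameter: the set $\mathcal C_{i,\varepsilon}$ is probed at $\vec\alpha_i=\vec\alpha_i^*$ precisely to test membership of the truth, so no data-dependence through an unknown MLE is introduced. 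Applying Theorem~\ref{prop:concentration} to this adapted $\vec z_k$, with $\varepsilon$ there replaced by $\varepsilon/(2D)$, gives
\begin{equation*}
\Pr\!\left( \int_0^T \vec z_k^\intercal(t)\, dS_{i,t}(\vec\alpha_i^*) - V_i(\vec z_k,\vec\alpha_i^*) \ge \ln(2D/\varepsilon)\right) \le \frac{\varepsilon}{2D}.
\end{equation*}
I would also note that the $V_i$ appearing in the corollary's constraints is literally the same integral $\int_0^T(\lambda_i(t)\exp(\lambda_i^{-1}(t)\vec z^\intercal(t)\vec\eta_i(t)) - \vec z^\intercal(t)\vec\eta_i(t) - \lambda_i(t))\,dt$ from Theorem~\ref{prop:concentration}, so the two statements line up verbatim.

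Next I would take a union bound over $k=1,\dots,2D$: with probability at least $1-\varepsilon$, all $2D$ inequalities $\int_0^T \vec z_k^\intercal(t,\vec\alpha_i^*)\, dS_{i,t}(\vec\alpha_i^*) - V_i(\vec z_k,\vec\alpha_i^*) \le \ln(2D/\varepsilon)$ hold simultaneously. On this event, $\vec\alpha_i^*$ satisfies the defining constraints of $\mathcal C_{i,\varepsilon}$ (which are exactly these inequalities evaluated at $\vec\alpha_i=\vec\alpha_i^*$), so $\vec\alpha_i^*\in\mathcal C_{i,\varepsilon}$; therefore $\Pr(\vec\alpha_i^*\in\mathcal C_{i,\varepsilon})\ge 1-\varepsilon$, which is the claim. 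This is essentially the adapted-$\vec z$ analogue of how one passes from the single-direction bound to the polyhedral set $\mathcal C_{i,\varepsilon}$ in Theorem~\ref{thm:concentration}.

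The main obstacle, such as it is, lies entirely in step one: one must verify carefully that $\vec z_k(t,\cdot)$ is evaluated at $\vec\alpha_i^*$ and depends on randomness only through the predictable estimator $\hat I_i(\vec\alpha_i^*,t)$, so that the adaptedness hypothesis of Theorem~\ref{prop:concentration} genuinely holds. The particular normalization $\sqrt{2\ln(2D/\varepsilon)/(T\,\vec e_j^\intercal \hat I_i^{-1}(\vec\alpha_i,t)\vec e_j)}$ and the sign $\pm$ are irrelevant to the coverage guarantee itself; they enter only in the width/tightness analysis, cf.\ Proposition~\ref{prop:width} and Lemma~\ref{lem:3.1}. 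Everything else is the union bound and unwinding definitions.
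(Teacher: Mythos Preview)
Your proposal is correct and follows essentially the same route as the paper: the paper's proof of Corollary~\ref{corollary:cs_adapted_z} (bundled with Theorems~\ref{thm:concentration} and~\ref{prop:concentration}) simply invokes the single-direction concentration bound and then takes a union bound over the $K=2D$ choices of $\vec z_k$ to conclude $\vec\alpha_i^*\in\mathcal C_{i,\varepsilon}$ with probability at least $1-\varepsilon$. Your explicit check that $\vec z_k(t,\vec\alpha_i^*)$ is adapted to $(\mathcal H_{t^-})_{t=0}^T$ is a detail the paper leaves implicit, and your observation that the particular normalization is irrelevant to coverage (only to width) is exactly right.
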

\begin{remark}
An example of estimator for the Fisher Information is 
\[
\hat{I}_i(\vec\alpha_i,t) =- \frac{1}{t} \int_0^{t} \lambda_i^{-2}(\tau)\vec\eta_i(\tau)\vec\eta_i^\intercal(\tau)dN_\tau^i,
\]
and if the estimator is rank deficient, we simply take it to be the identity matrix.
\end{remark}
For simplicity, we use $g_k(\vec\alpha_i)$ to denote
\[
\int_0^T \vec z_k^\intercal(t,\vec\alpha_i)dS_{i,t}(\vec\alpha_i) - V_i(\vec z_k,\vec\alpha_i),\ k=1,\cdots,2D.
\]
The CI of entry $\alpha_{ij}$ is then
$\{\alpha_{ij}: g_k(\vec\alpha_i)\leq \ln(2D/\varepsilon),k=1,\cdots,2D\}$, This CI can be computed by numerically inverting the functions $g_k,k=1,\cdots,2D$, but it may be time-consuming. Since $\widehat{\vec\alpha}_i\to\vec\alpha_i^*$ with probability one when $T\to\infty$, we can approximate $g_k(\vec\alpha_i^*)$
using first order Taylor expansion at $\widehat{\vec\alpha}_i$. Let
\[
\tilde g_k(\vec\alpha_i) = g_k(\widehat{\vec\alpha}_i) + (\vec\alpha_i - \widehat{\vec\alpha}_i)^\intercal\frac{\partial g_k(\widehat{\vec\alpha}_i)}{\partial \vec\alpha_i},
\]
an approximated confidence set for $\vec\alpha_i$ is
\[
\mathcal C_{i,\varepsilon}^p = \left\{\vec\alpha_i\in\R^D: \tilde g_k(\vec\alpha_i)\leq \ln(2D/\varepsilon),k=1,\cdots,2D\right\},
\]
which is a polyhedron.

With the polyhedron $\mathcal C_{i,\varepsilon}^p$, we can easily get CI on each entry $\alpha_{ij}$
\[
\left[\min\{\alpha_{ij}: \vec\alpha_i \in \mathcal C_{i,\varepsilon}^p\},\max\{\alpha_{ij}: \vec\alpha_i \in \mathcal C_{i,\varepsilon}^p\}\right].
\]
using linear optimization.

Algorithm \ref{alg} summarizes how to find the concentration-bound based confidence set.

\begin{algorithm}[H]
\caption{Polyhedral Confidence Set for $\vec\alpha_i$}\label{alg:EM}
\label{alg}
\SetAlgoLined
\bf{Input:} confidence level $1-\varepsilon$, data $\{(t_i, u_i)\}$, estimator $\hat I_i^{-1}(\cdot,\cdot)$\;
Compute the MLE $\widehat{\vec\alpha}_i$ by convex optimization \eqref{eq:mle}\;
\For {$k = 1,\cdots,2D$}{
$$
g_k(\widehat{\vec\alpha}_i) = \int_0^T dS_{i,t}(\widehat{\vec\alpha}_i)\vec z_k(t,\widehat{\vec\alpha}_i) - V_i(\vec z_k,\widehat{\vec\alpha}_i),
$$
$$
g_{k}'(\widehat{\vec\alpha}_i) := \frac{\partial g_k(\widehat{\vec\alpha}_i)}{\partial \vec\alpha_i}.
$$
}
Output: 
$
\mathcal C_{i,\varepsilon}^p := \big\{\vec\alpha_i\in \R^D:
g_k(\widehat{\vec\alpha}_i) +(\vec\alpha_i-\widehat{\vec\alpha}_i)^\intercal g_k'(\widehat{\vec\alpha}_i)  \leq \ln(2D/\varepsilon), k=1,\cdots,2D\big\}
$

\end{algorithm}

\vspace{-0.1in}
\section{Numerical Experiment}
\vspace{-0.1in}

In this section, we present a numerical example based on synthetic data to demonstrate the performance of the proposed confidence intervals. We compare the coverage ratio of the confidence intervals: the percentage of confidence intervals that contain the true parameters, for the same nominal confidence level ($1-\varepsilon$).




\begin{figure}[!t]
\centering
\includegraphics[width=0.95\textwidth]{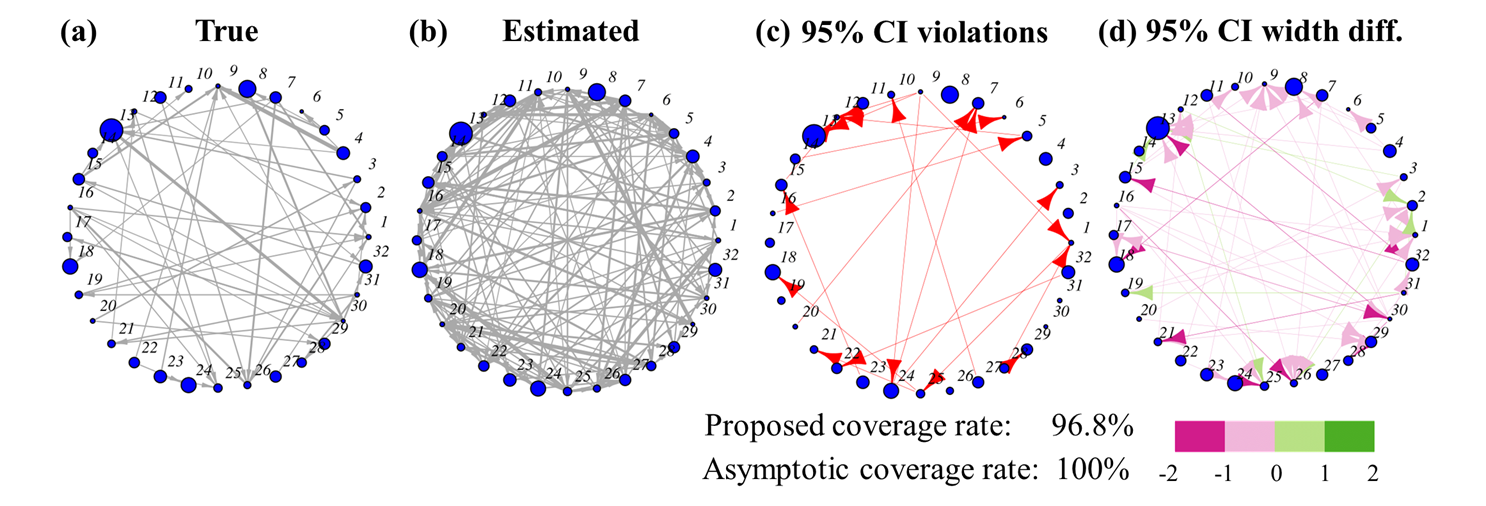}
\caption{(a) and (b): Visualizing the ``true'' and estimated influence matrix $A$ as edges and background rates $\boldsymbol{\mu}$ as nodes. Wider edges indicate greater influence, and larger nodes indicate greater background rates. (c): Edges whose 95\% CIs do not cover the true influence parameter for the proposed CI method. The coverage rate for the proposed and asymptotic CIs are 96.8\% and 100\%, respectively. (d): Visualizing the difference in 95\% CI widths between the proposed and asymptotic CIs. Purple edges and green edges indicate narrower and wider widths for the proposed CI, respectively.}
\vspace{-0.1in}
\label{fig:neuron_structure}
\end{figure}

We study uncertainty quantification for reconstructing neuronal networks. 
Recent developments in neural engineering have allowed researchers to simultaneously record precise spike train data from large numbers of biological neurons \cite{kobayashi2019reconstructing}. A key challenge is harnessing this data to learn the connectivity of biological neural networks, which provides insight on the functions of such networks. We show next how the proposed method can quantify the uncertainty of the reconstructed neuronal connectivity from spiking data. This uncertainty is crucial for neuronal reconstruction: it provides a principled statistical framework for testing different neurological theories and hypotheses.

The experimental set-up is as follows. The neural spike train data is simulated via the PyNN Python package \cite{pynn} with the NEURON simulator \cite{NEURON}, which was chosen over in vivo recordings for straightforward data collection. The neuronal network consists of excitatory and inhibitory networks in a ratio of $4$ to $1$, which are connected sparsely and at random. The neurons are modeled as exponential integrate-and-fire neurons with default parameters, which have been shown to accurately capture biological neural dynamics \cite{EIF_neuron}. Following \cite{network_simulation}, each excitatory neuron receives a stochastic Poisson process-distributed excitation from an external source, reflecting the external inputs from biological networks either from the environment or from neurons which are not being recorded.

Using the above network structure with $D=32$ neurons, we simulate a \textit{long} sequence (2000 seconds) of spiking data, and fit a Hawkes network using an exponential influence function with a decay rate of $1$ millisecond. This fitted model (with estimates of the influence matrix $A$ and background rate vector $\boldsymbol{\mu}$) can be viewed as the Hawkes network ``closest'' to the complex neuroscience model which generated the data. The fitted parameters for $A$ and $\boldsymbol{\mu}$ (see Figure \ref{fig:neuron_structure} (a)) are then set as the ``true'' parameters for evaluating CI coverage. We then simulate a \textit{shorter} sequence (400 seconds) of spiking data for constructing the proposed (non-asymptotic) and asymptotic CIs on $A$. Figure \ref{fig:neuron_structure} (b) shows the MLE of $A$, estimated using this shorter sequence. Note that, while the connectivity for the ``true'' topology is quite sparse, the estimated connectivity is noticeably more dense, perhaps due to the limited data in the shorter sequence. In this limited data setting, there is an increasing need for uncertainty quantification to validate neuronal connectivity.

Consider now the coverage performance of the proposed (non-asymptotic) and asymptotic CIs. At a confidence level of 95\%, the coverage rate of the proposed method (over all influence parameters in $A$) is 96.8\%, whereas the coverage rate for the asymptotic method is 100\%. Hence, the proposed CIs indeed provide similar coverage to the desired confidence level of 95\%, whereas the asymptotic CIs are too wide and over-covers the true parameters. Figure \ref{fig:neuron_structure} shows the edges with influence parameters \textit{not} covered by the proposed method. All of these edges have a true influence of 0, i.e., such edges were not in the true topology, but had positive CIs. Figure \ref{fig:neuron_structure} visualizes the difference in CI widths between the proposed and asymptotic CIs, for edges with non-zero true influence. Here, purple edges and green edges indicate narrower and wider widths for the proposed CI, respectively. We see that the proposed method yields noticeably narrower CIs compared to the asymptotic approach, which enables more precise inference on the influence matrix. This in turn provides greater certainty on the reconstructed neuronal network, particularly given limited experimental data.


\clearpage

\section*{Broader Impact}

Our method can be useful for many applications involving Hawkes processes, including seismology, social networks, neuroscience and more. In particular, it is useful for performing causal inference and making statistically significant claims. Recent developments in neuroscience and engineering have allowed researchers to simultaneously record precise spiking data from large numbers of biological neurons. A key challenge is harnessing this experimental data to learn the underlying connectivity of biological neural networks, which is integral for understanding the functions of such networks. We show how the proposed model can be used to both learn this connectivity information and quantify uncertainty from observed neural spike data.

\section*{Acknowledgement}
This work is partially funded by an NSF CAREER Award CCF-1650913, CMMI-2015787, DMS-1938106, and DMS-1830210.

\bibliographystyle{plain}
\bibliography{Hawkes}

\begin{thebibliography}{10}

\bibitem{achab2017uncovering}
Massil Achab, Emmanuel Bacry, St{\'e}phane Ga{\"\i}ffas, Iacopo Mastromatteo,
  and Jean-Fran{\c{c}}ois Muzy.
\newblock Uncovering causality from multivariate hawkes integrated cumulants.
\newblock {\em The Journal of Machine Learning Research}, 18(1):6998--7025,
  2017.

\bibitem{bacry2016first}
Emmanuel Bacry and Jean-Fran{\c{c}}ois Muzy.
\newblock First-and second-order statistics characterization of hawkes
  processes and non-parametric estimation.
\newblock {\em IEEE Transactions on Information Theory}, 62(4):2184--2202,
  2016.

\bibitem{EIF_neuron}
Romain Brette and Wulfram Gerstner.
\newblock Adaptive exponential integrate-and-fire model as an effective
  description of neuronal activity.
\newblock {\em Journal of neurophysiology}, 94(5):3637--3642, 2005.

\bibitem{network_simulation}
Romain Brette, Michelle Rudolph, Ted Carnevale, Michael Hines, David Beeman,
  James~M Bower, Markus Diesmann, Abigail Morrison, Philip~H Goodman,
  Frederick~C Harris, et~al.
\newblock Simulation of networks of spiking neurons: a review of tools and
  strategies.
\newblock {\em Journal of computational neuroscience}, 23(3):349--398, 2007.

\bibitem{NEURON}
Nicholas~T Carnevale and Michael~L Hines.
\newblock {\em The NEURON book}.
\newblock Cambridge University Press, 2006.

\bibitem{pynn}
Andrew~P Davison, Daniel Br{\"u}derle, Jochen~M Eppler, Jens Kremkow, Eilif
  Muller, Dejan Pecevski, Laurent Perrinet, and Pierre Yger.
\newblock Pynn: a common interface for neuronal network simulators.
\newblock {\em Frontiers in neuroinformatics}, 2:11, 2009.

\bibitem{ding2011analyzing}
Mingzhou Ding, Jue Mo, Charles~E Schroeder, and Xiaotong Wen.
\newblock Analyzing coherent brain networks with granger causality.
\newblock In {\em 2011 Annual International Conference of the IEEE Engineering
  in Medicine and Biology Society}, pages 5916--5918. IEEE, 2011.

\bibitem{eichler2017graphical}
Michael Eichler, Rainer Dahlhaus, and Johannes Dueck.
\newblock Graphical modeling for multivariate hawkes processes with
  nonparametric link functions.
\newblock {\em Journal of Time Series Analysis}, 38(2):225--242, 2017.

\bibitem{howard2018exponential}
Steven~R Howard, Aaditya Ramdas, Jon McAuliffe, and Jasjeet Sekhon.
\newblock Exponential line-crossing inequalities.
\newblock {\em arXiv preprint arXiv:1808.03204}, 2018.

\bibitem{howard2018uniform}
Steven~R Howard, Aaditya Ramdas, Jon McAuliffe, and Jasjeet Sekhon.
\newblock Uniform, nonparametric, non-asymptotic confidence sequences.
\newblock {\em arXiv preprint arXiv:1810.08240}, 2018.

\bibitem{howard2020time}
Steven~R Howard, Aaditya Ramdas, Jon McAuliffe, Jasjeet Sekhon, et~al.
\newblock Time-uniform chernoff bounds via nonnegative supermartingales.
\newblock {\em Probability Surveys}, 17:257--317, 2020.

\bibitem{juditsky2020convex}
Anatoli Juditsky, Arkadi Nemirovski, Liyan Xie, and Yao Xie.
\newblock Convex recovery of marked spatio-temporal point processes.
\newblock {\em arXiv preprint arXiv:2003.12935}, 2020.

\bibitem{kobayashi2019reconstructing}
Ryota Kobayashi, Shuhei Kurita, Anno Kurth, Katsunori Kitano, Kenji Mizuseki,
  Markus Diesmann, Barry~J Richmond, and Shigeru Shinomoto.
\newblock Reconstructing neuronal circuitry from parallel spike trains.
\newblock {\em Nature Communications}, 10(1):1--13, 2019.

\bibitem{kusmierczyk2018causal}
Tomasz Kusmierczyk and Manuel Gomez-Rodriguez.
\newblock On the causal effect of badges.
\newblock In {\em Proceedings of the 2018 World Wide Web Conference}, pages
  659--668, 2018.

\bibitem{leskovec2009community}
Jure Leskovec, Kevin~J Lang, Anirban Dasgupta, and Michael~W Mahoney.
\newblock Community structure in large networks: Natural cluster sizes and the
  absence of large well-defined clusters.
\newblock {\em Internet Mathematics}, 6(1):29--123, 2009.

\bibitem{li2017detecting}
Shuang Li, Yao Xie, Mehrdad Farajtabar, Apurv Verma, and Le~Song.
\newblock Detecting changes in dynamic events over networks.
\newblock {\em IEEE Transactions on Signal and Information Processing over
  Networks}, 3(2):346--359, 2017.

\bibitem{li2018uncertainty}
Yixing Li, Xingjian Wang, Simon Mak, Chih-Li Sung, C~F~Jeff Wu, and Vigor Yang.
\newblock Uncertainty quantification of flame transfer function under a
  bayesian framework.
\newblock In {\em 2018 AIAA Aerospace Sciences Meeting}, page 1187, 2018.

\bibitem{ogata1978asymptotic}
Yoshiko Ogata.
\newblock The asymptotic behaviour of maximum likelihood estimators for
  stationary point processes.
\newblock {\em Annals of the Institute of Statistical Mathematics},
  30(1):243--261, 1978.

\bibitem{qian2016uncertainty}
Yun Qian, Charles Jackson, Filippo Giorgi, Ben Booth, Qingyun Duan, Chris
  Forest, Dave Higdon, Z~Jason Hou, and Gabriel Huerta.
\newblock Uncertainty quantification in climate modeling and projection.
\newblock {\em Bulletin of the American Meteorological Society},
  97(5):821--824, 2016.

\bibitem{rasmussen2013bayesian}
Jakob~Gulddahl Rasmussen.
\newblock Bayesian inference for hawkes processes.
\newblock {\em Methodology and Computing in Applied Probability},
  15(3):623--642, 2013.

\bibitem{reinhart2018review}
Alex Reinhart et~al.
\newblock A review of self-exciting spatio-temporal point processes and their
  applications.
\newblock {\em Statistical Science}, 33(3):299--318, 2018.

\bibitem{reynaud2003adaptive}
Patricia Reynaud-Bouret.
\newblock Adaptive estimation of the intensity of inhomogeneous poisson
  processes via concentration inequalities.
\newblock {\em Probability Theory and Related Fields}, 126(1):103--153, 2003.

\bibitem{reynaud2013inference}
Patricia Reynaud-Bouret, Vincent Rivoirard, and Christine Tuleau-Malot.
\newblock Inference of functional connectivity in neurosciences via hawkes
  processes.
\newblock In {\em 2013 IEEE Global Conference on Signal and Information
  Processing}, pages 317--320. IEEE, 2013.

\bibitem{reynaud2007some}
Patricia Reynaud-Bouret, Emmanuel Roy, et~al.
\newblock Some non asymptotic tail estimates for hawkes processes.
\newblock {\em Bulletin of the Belgian Mathematical Society-Simon Stevin},
  13(5):883--896, 2007.

\bibitem{rizoiu2018sir}
Marian-Andrei Rizoiu, Swapnil Mishra, Quyu Kong, Mark Carman, and Lexing Xie.
\newblock Sir-hawkes: linking epidemic models and hawkes processes to model
  diffusions in finite populations.
\newblock In {\em Proceedings of the 2018 World Wide Web Conference}, pages
  419--428, 2018.

\bibitem{sacks1989design}
Jerome Sacks, William~J Welch, Toby~J Mitchell, and Henry~P Wynn.
\newblock Design and analysis of computer experiments.
\newblock {\em Statistical Science}, pages 409--423, 1989.

\bibitem{salehi2019learning}
Farnood Salehi, William Trouleau, Matthias Grossglauser, and Patrick Thiran.
\newblock Learning hawkes processes from a handful of events.
\newblock In {\em Advances in Neural Information Processing Systems}, pages
  12715--12725, 2019.

\bibitem{smith2013uncertainty}
Ralph~C Smith.
\newblock {\em Uncertainty Quantification: Theory, Implementation, and
  Applications}, volume~12.
\newblock SIAM, 2013.

\bibitem{xu2016learning}
Hongteng Xu, Mehrdad Farajtabar, and Hongyuan Zha.
\newblock Learning granger causality for hawkes processes.
\newblock In {\em International Conference on Machine Learning}, pages
  1717--1726, 2016.

\bibitem{yuan2019multivariate}
Baichuan Yuan, Hao Li, Andrea~L Bertozzi, P~Jeffrey Brantingham, and Mason~A
  Porter.
\newblock Multivariate spatiotemporal hawkes processes and network
  reconstruction.
\newblock {\em SIAM Journal on Mathematics of Data Science}, 1(2):356--382,
  2019.

\end{thebibliography}

 \clearpage
 \appendix
\section{Example: Exponential decay function}\label{app:exponential_kernel}

Here we give the analysis for the score function and the Fisher information under exponential decay function $\varphi_{ij}(\Delta t) = \beta e^{-\beta \Delta t}$. 
The score function is,
$$
S_i(\vec\alpha_i^*) = \ \int_0^T\frac{\int_0^t\beta e^{-\beta(t-\tau)}d\vec N_\tau}{\mu_i + (\vec{\alpha}_i^*)^\intercal\int_0^t\beta e^{-\beta(t-\tau)}d\vec N_\tau}(d N_t^i - \lambda_i^*(t)dt),
$$
where $d\vec N_t = (dN_t^1,\cdots dN_t^D)^\intercal$.



We show that $S_i(\vec\alpha_i^*)$ is small by giving an upper bound of its covariance matrix, which is $T$ times the Fisher information.

Assume the Hawkes process with parameter $\vec{\alpha}_i,\mu_i,i=1,\cdots,D,\beta$ is stationary, we have
\begin{align*}
I_i^* 
=&\ \E\left[\frac{\left(\int_{-\infty}^t\beta e^{-\beta(t-\tau)}d\vec N_\tau\right)\left(\int_{-\infty}^t\beta e^{-\beta(t-\tau)}d\vec N_\tau\right)^\intercal}{\mu_i + (\vec{\alpha}_i^*)^\intercal\int_{-\infty}^t\beta e^{-\beta(t-\tau)}d\vec N_\tau}\right]
\end{align*}
Since $
\vec{\alpha}_i^T\int_{-\infty}^t\beta e^{-\beta(t-\tau)}dN_\tau \geq 0
$, we have
$$
I_i^*\preceq \mu_i^{-1}\underbrace{\E\left[\left(\int_{-\infty}^t\beta e^{-\beta(t-\tau)}d\vec N_\tau\right)\left(\int_{-\infty}^t\beta e^{-\beta(t-\tau)}d\vec N_\tau\right)^\intercal\right]}_{W},
$$
where $W$ has a close-form expression for Hawkes processes with exponential influence function, derived from \cite{bacry2016first} and \cite{li2017detecting}.

\begin{lem}
\[
W = \Lambda\Lambda^\intercal + \frac{\beta}{2}\Sigma + \frac{\beta}{4}A(\mathbb I-A)^{-1}\Sigma + \frac{\beta}{4}\Sigma A^\intercal(\mathbb I-A^\intercal)^{-1},
\]
where $\mathbb I$ is the identity matrix, $A = (\vec\alpha_1^*,\cdots,\vec\alpha_D^*)^\intercal,$ $\Lambda = (\mathbb I-A)^{-1}\vec \mu$ is the expected intensity, and $\Sigma = \text{diag}(\Lambda)$. 
\end{lem}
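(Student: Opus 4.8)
The quantity $W=\E[\vec Z\vec Z^\intercal]$ with $\vec Z_t:=\int_{-\infty}^t\beta e^{-\beta(t-\tau)}\,d\vec N_\tau$ is the stationary second moment of the coordinatewise exponential smoothing of the counting process, so the plan is to obtain it from the first- and second-order stationary statistics of the multivariate exponential Hawkes process (as characterized in \cite{bacry2016first,li2017detecting}). The starting point is that $\vec Z_t$ obeys the linear jump SDE $d\vec Z_t=-\beta\vec Z_t\,dt+\beta\,d\vec N_t$, while the conditional intensity is $\vec\lambda_t=\vec\mu+A\vec Z_{t^-}$ with $A=(\vec\alpha_1^*,\dots,\vec\alpha_D^*)^\intercal$. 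Taking expectations in the SDE under stationarity gives $\E[\vec Z_t]=(\mathbb I-A)^{-1}\vec\mu=\Lambda$, which peels off the rank-one piece: $W=\Lambda\Lambda^\intercal+C$ with $C:=\mathrm{Cov}(\vec Z_t)$, so the task reduces to identifying $C$.

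To get an equation for $C$ I would apply the semimartingale product rule to $\vec Z_t\vec Z_t^\intercal$: the drift of each factor contributes $-\beta\vec Z_{t^-}\vec Z_{t^-}^\intercal\,dt$, the martingale/cross terms contribute $\beta\,d\vec N_t\,\vec Z_{t^-}^\intercal$ and its transpose, and the quadratic covariation of the jumps contributes $\beta^2\,\mathrm{diag}(d\vec N_t)$ because the coordinates of $\vec N$ almost surely never jump simultaneously. Taking the $\mathcal H_{t^-}$-conditional expectation ($\E[d\vec N_t\mid\mathcal H_{t^-}]=\vec\lambda_t\,dt$), then the unconditional expectation, and imposing $\tfrac{d}{dt}\E[\vec Z_t\vec Z_t^\intercal]=0$ yields, after substituting $\vec\mu=(\mathbb I-A)\Lambda$, the Lyapunov-type identity $(\mathbb I-A)C+C(\mathbb I-A^\intercal)=\beta\Sigma$ with $\Sigma=\mathrm{diag}(\Lambda)$. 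Equivalently one may write $\vec Z_t-\Lambda=\int_{-\infty}^t K(t-s)\,d\vec M_s$ against the compensated martingale $\vec M$ ($d\vec M_t=d\vec N_t-\vec\lambda_t\,dt$), where $K$ is assembled from the exponential smoothing weights and the renewal kernel $\Psi=\sum_{n\ge1}\Phi^{*n}$ of $\Phi(u)=A\beta e^{-\beta u}$; for the exponential kernel this collapses to the closed form $K(u)=\beta e^{-\beta(\mathbb I-A)u}$, and the martingale isometry gives $C=\int_0^\infty K(u)\Sigma K(u)^\intercal\,du$ — the route implicit in the cited second-order characterizations, and which again leads to the same matrix equation for $C$.

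It then remains to verify that the stated expression $C=\tfrac\beta2\Sigma+\tfrac\beta4 A(\mathbb I-A)^{-1}\Sigma+\tfrac\beta4\Sigma A^\intercal(\mathbb I-A^\intercal)^{-1}$ satisfies $(\mathbb I-A)C+C(\mathbb I-A^\intercal)=\beta\Sigma$, whence $W=\Lambda\Lambda^\intercal+C$. This closed-form check is the step I expect to be the main obstacle: it uses the commutations $A(\mathbb I-A)^{-1}=(\mathbb I-A)^{-1}A$ and $(\mathbb I-A)A(\mathbb I-A)^{-1}=A$, but because $A$ and $A^\intercal$ need not commute one has to track the asymmetric cross terms carefully, and it is exactly here that the exponential form of $\varphi_{ij}$ is essential — for a general influence kernel the renewal series $\Psi$ has no elementary closed form and one is left only with the integral (Lyapunov) description of $C$. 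Throughout I would keep the standing stationarity hypothesis (spectral radius of $A$ strictly below one), which is what makes $(\mathbb I-A)^{-1}$, the relevant matrix exponentials, and the interchange of expectation with the stochastic integrals legitimate.
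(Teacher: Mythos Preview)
Your route is genuinely different from the paper's. The paper expands $W$ as the double integral $\int\!\!\int \beta^2 e^{\beta(t+t')}\E[d\vec N_t\,d\vec N_{t'}^\intercal]$, substitutes a closed-form covariance density $c(\tau)$ taken from \cite{li2017detecting}, integrates against the exponential weights, and then simplifies via the identity $(2\mathbb I-A)^{-1}A\bigl(\mathbb I+\tfrac12(\mathbb I-A)^{-1}A\bigr)=\tfrac12 A(\mathbb I-A)^{-1}$. Your plan instead rewrites the SDE as $d(\vec Z_t-\Lambda)=-\beta(\mathbb I-A)(\vec Z_t-\Lambda)\,dt+\beta\,d\vec M_t$, applies the product rule to $\vec Z_t\vec Z_t^\intercal$, and extracts the Lyapunov equation $(\mathbb I-A)C+C(\mathbb I-A^\intercal)=\beta\Sigma$. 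This is more self-contained because it does not import $c(\tau)$ from an outside reference, and it also gives the integral representation $C=\int_0^\infty K(u)\Sigma K(u)^\intercal\,du$ with $K(u)=\beta e^{-\beta(\mathbb I-A)u}$ directly.

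There is, however, a real obstruction at exactly the step you flag as ``the main obstacle.'' Your Lyapunov equation is correct (and has a unique symmetric solution under the spectral-radius assumption), but the displayed $C=\tfrac\beta2\Sigma+\tfrac\beta4 A(\mathbb I-A)^{-1}\Sigma+\tfrac\beta4\Sigma A^\intercal(\mathbb I-A^\intercal)^{-1}$ does \emph{not} satisfy it for general non-symmetric $A$. Writing $B=A(\mathbb I-A)^{-1}$ and plugging in yields
\[
(\mathbb I-A)C+C(\mathbb I-A^\intercal)=\beta\Sigma-\tfrac{\beta}{4}\bigl[A\Sigma+\Sigma A^\intercal-(\mathbb I-A)\Sigma B^\intercal-B\Sigma(\mathbb I-A^\intercal)\bigr],
\]
and the bracket does not vanish once $A$ and $\Sigma$ fail to commute suitably. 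Concretely, for $D=2$ and $A=\bigl(\begin{smallmatrix}0&a\\0&0\end{smallmatrix}\bigr)$ (so $B=A$), the integral representation gives $C_{11}=\tfrac{\beta}{2}\Sigma_{11}+\tfrac{a^2\beta}{4}\Sigma_{22}$, whereas the lemma's formula gives only $C_{11}=\tfrac{\beta}{2}\Sigma_{11}$. So your verification will not close; the discrepancy points to an issue either in the stated formula for $W$ or in the $c(\tau)$ expression the paper imports. Your Lyapunov/It\^o-isometry argument in fact produces the correct covariance $C=\int_0^\infty K\Sigma K^\intercal$ and is the sounder way to proceed here.
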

\begin{proof}
By Lemma 2 and 3 in \cite{li2017detecting}, we have
\[
\E[d\vec N_t] = \Lambda dt,
\]
and
\[
\text{Cov}[d\vec N_t,d\vec N_{t'}^\intercal] = c(t-t')dtdt',
\]
where 
\[
c(\tau) = \begin{cases}
\beta e^{-\beta(\mathbb I - A)\tau} A\left(\mathbb I + \frac{1}{2}(\mathbb I-A)^{-1}A\right)\Sigma, & \tau > 0;\\
\Sigma\delta(\tau),&\tau = 0;\\
c(-\tau)^\intercal,&\tau<0,
\end{cases}
\]
where $\delta(\cdot)$ is the Dirac delta  function. Then
\begingroup
\allowdisplaybreaks
\begin{align*}
W = &\ \E\left[\left(\int_{-\infty}^t\beta e^{-\beta(t-\tau)}d\vec N_\tau\right)\left(\int_{-\infty}^t\beta e^{-\beta(t-\tau)}d\vec N_\tau\right)^\intercal\right]\\
=&\ \E\left[\int_{-\infty}^0\int_{-\infty}^0\beta^2 e^{\beta (t+t')}d\vec N_td\vec N_{t'}^\intercal\right]\\
=&\ \int_{-\infty}^0\int_{-\infty}^0 \beta^2 e^{\beta(t+t')}\E[d\vec N_td\vec N_{t'}]\\
=&\ \Lambda\Lambda^\intercal + \int_{-\infty}^0\int_{-\infty}^0 \beta^2 e^{\beta(t+t')}\text{Cov}[d\vec N_t,d\vec N_{t'}]\\
=&\ \Lambda\Lambda^\intercal + \int_{-\infty}^0 \beta^2 e^{2\beta t}\Sigma dt + \iint_{t\leq 0,\tau\in (0,-t]}\beta^2 e^{\beta(2t+\tau)}(c(\tau)+c(-\tau))dtd\tau\\
=&\ \Lambda\Lambda^\intercal + \frac{\beta}{2}\Sigma + \int_0^\infty (c(\tau)+c(-\tau))d\tau\int_{-\infty}^{-\tau} \beta^2 e^{\beta(2t+\tau)}dt\\
=&\ \Lambda\Lambda^\intercal + \frac{\beta}{2}\Sigma + \int_0^\infty (c(\tau)+c(-\tau))\frac{\beta}{2}e^{-\beta\tau}d\tau\\
=&\ \Lambda\Lambda^\intercal + \frac{\beta}{2}\Sigma + \int_0^\infty \frac{\beta^2}{2}\Bigg[e^{-\beta(2\mathbb I-A)\tau}A(\mathbb I+\frac{1}{2}(\mathbb I-A)^{-1}A)\Sigma \\
& \hspace{170pt}+\left(e^{-\beta(2\mathbb I-A)\tau}A(\mathbb I+\frac{1}{2}(\mathbb I-A)^{-1}A)\Sigma\right)^\intercal\Bigg]d\tau\\
=&\ \Lambda\Lambda^\intercal + \frac{\beta}{2}\Sigma + \frac{\beta}{2}\Bigg[(2\mathbb I-A)^{-1}A(\mathbb I+\frac{1}{2}(\mathbb I-A)^{-1}A)\Sigma \\
& \hspace{170pt} + \left((2\mathbb I-A)^{-1}A(\mathbb I+\frac{1}{2}(\mathbb I-A)^{-1}A)\Sigma\right)^\intercal\Bigg].
\end{align*}
\endgroup
We notice that 
\[
(2\mathbb I-A)^{-1}A(\mathbb I+\frac{1}{2}(\mathbb I-A)^{-1}A) = (2\mathbb I-A)^{-1}A(\mathbb I-A)^{-1}(\mathbb I-A/2) = \frac{1}{2}A(\mathbb I-A)^{-1}.
\]
Together, we prove the lemma.
\end{proof}

\begin{lem}
For any vector $\vec z$, 
$$
P \left(\vec z^\intercal S_i(\vec\alpha_i^*) 
\geq \varepsilon\sqrt{T}\right)\leq \frac{\mu_i^{-1}\vec z^\intercal W\vec z}{\varepsilon^2}.
$$
\end{lem}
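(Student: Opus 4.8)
The plan is to regard $\vec z^\intercal S_i(\vec\alpha_i^*)$ as a scalar random variable, identify its mean and variance, and then apply Chebyshev's inequality. First, recall from the discussion preceding Theorem \ref{thm:concentration} that
$S_{i,t}(\vec\alpha_i^*) = \int_0^t \lambda_i^{*-1}(\tau)\vec\eta_i(\tau)(dN_\tau^i - \lambda_i^*(\tau)d\tau)$ is a continuous-time martingale with respect to the history filtration: the integrand $\lambda_i^{*-1}(\tau)\vec\eta_i(\tau)$ is predictable (it is a function of events strictly before $\tau$, cf.\ \eqref{eq:eta}), and $dN_\tau^i-\lambda_i^*(\tau)d\tau$ is the compensated counting measure under the true parameter. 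In particular $\E[S_i(\vec\alpha_i^*)]=\vec 0$, so $\E[\vec z^\intercal S_i(\vec\alpha_i^*)]=0$ for every fixed $\vec z$.

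Next I would compute the variance via the Itô isometry for integrals against a compensated point process (equivalently, the predictable quadratic variation of $M_T:=\vec z^\intercal S_i(\vec\alpha_i^*)$ is $\langle M\rangle_T=\int_0^T \big(\lambda_i^{*-1}(\tau)\vec z^\intercal\vec\eta_i(\tau)\big)^2\lambda_i^*(\tau)\,d\tau=\int_0^T \lambda_i^{*-1}(\tau)\big(\vec z^\intercal\vec\eta_i(\tau)\big)^2 d\tau$), which yields
\[
\text{Var}\!\left(\vec z^\intercal S_i(\vec\alpha_i^*)\right)=\E\!\left[\int_0^T \lambda_i^{*-1}(\tau)\big(\vec z^\intercal\vec\eta_i(\tau)\big)^2 d\tau\right].
\]
Under the stationarity assumption the integrand has expectation independent of $\tau$, so this equals $T\,\E[\lambda_i^{*-1}(\vec z^\intercal\vec\eta_i)^2]=T\,\vec z^\intercal I_i^*\vec z$ by the definition of the Fisher information $I_i^*=\E[\lambda_i^{*-1}\vec\eta_i\vec\eta_i^\intercal]$.

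Then I would invoke the bound $I_i^*\preceq \mu_i^{-1}W$ established just above (using $\mu_i+(\vec\alpha_i^*)^\intercal\int_{-\infty}^t\beta e^{-\beta(t-\tau)}d\vec N_\tau\geq\mu_i$ together with the closed-form expression for $W$), giving $\text{Var}(\vec z^\intercal S_i(\vec\alpha_i^*))\leq T\mu_i^{-1}\vec z^\intercal W\vec z$. Finally Chebyshev's inequality yields
\[
P\!\left(\vec z^\intercal S_i(\vec\alpha_i^*)\geq\varepsilon\sqrt T\right)\leq P\!\left(\big|\vec z^\intercal S_i(\vec\alpha_i^*)\big|\geq\varepsilon\sqrt T\right)\leq\frac{\text{Var}(\vec z^\intercal S_i(\vec\alpha_i^*))}{\varepsilon^2 T}\leq\frac{\mu_i^{-1}\vec z^\intercal W\vec z}{\varepsilon^2},
\]
which is the claimed inequality. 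The main obstacle is the variance computation: rigorously justifying the Itô-isometry step (the predictable quadratic variation formula for $M_T$) and the exchange of expectation and time-integral, together with making the passage from the time-integral to $T\,\vec z^\intercal I_i^*\vec z$ precise — one should either assume the process starts in its stationary regime or replace $I_i^*$ by its time-averaged analogue. The remaining steps are routine.
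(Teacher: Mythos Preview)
Your proof is correct and follows essentially the same approach as the paper: compute $\text{Var}[\vec z^\intercal S_i(\vec\alpha_i^*)] = T\vec z^\intercal I_i^*\vec z$, bound it by $T\mu_i^{-1}\vec z^\intercal W\vec z$ using $I_i^*\preceq \mu_i^{-1}W$, and apply Markov's inequality to $(\vec z^\intercal S_i(\vec\alpha_i^*))^2$ (equivalently, Chebyshev). You supply more justification for the variance identity via the predictable quadratic variation and stationarity, whereas the paper simply asserts it, but the argument is the same.
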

\begin{proof}
\[
\text{Var}\left[\vec z^\intercal S_i(\vec\alpha_i^*)\right] = T\vec z^\intercal I_i^* \vec z\leq \mu_i^{-1} T\vec z^\intercal W\vec z,
\]
by Markov's inequality on the random variable $(\vec z^\intercal S_i(\vec\alpha_i^*))^2$, we proof the lemma.

\end{proof}

\section{Proofs}

The proof of Theorem\,\ref{thm:concentration} and Theorem\,\ref{prop:concentration} is an immediate results of the following two lemmas. 
\begin{lem}\label{lem:intrinsic_variance}
For any measurable random process $(\vec z(t)\in \R^D)_{t\in[0,T]}$ adapted to the same filtration $(\mathcal H_{t})_{t\in [0,T]}$ with the Hawkes process, let the intrinsic variance of $\int_0^t \vec z^\intercal (\tau) dS_{i,\tau}(\vec\alpha_i^*)$ (denoted by $V_{i,t}(\vec z)$) be a random process also adapted to $(\mathcal H_{t})_{t\in [0,T]}$, such that there exists a supermartingale $(M_t(\vec z))_{t\in[0,T]}$ with respect to $(\mathcal H_t)_{t\in[0,T]}$,
\[
\exp\left(\int_0^t\vec z^\intercal (\tau) d S_{i,\tau}(\vec\alpha_i^*) - V_{i,t}(\vec z)\right)\leq M_t(\vec z)
\]
almost surely. Then $\forall \varepsilon\in(0,1),$
\[
\Pr\left(\int_0^T\vec z^\intercal(t) dS_{i,t}(\vec\alpha_i^*) - V_{i,T}(\vec z)\geq \ln(\E[M_0(\vec z)]/\varepsilon)\right)\leq \varepsilon.
\]
\end{lem}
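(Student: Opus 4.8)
The plan is to recognize this lemma as a one-time-point instance of Ville's inequality: Markov's inequality applied to the nonnegative supermartingale $M_t(\vec z)$ evaluated at $t=T$. First I would introduce the shorthand
\[
L_t(\vec z) := \exp\left(\int_0^t \vec z^\intercal(\tau)\, dS_{i,\tau}(\vec\alpha_i^*) - V_{i,t}(\vec z)\right),
\]
so that $L_t(\vec z) > 0$ for every $t$, and by the hypothesis of the lemma $L_t(\vec z) \le M_t(\vec z)$ almost surely; in particular $0 < L_T(\vec z) \le M_T(\vec z)$ a.s., which in passing forces $M_T(\vec z)$ to be nonnegative.

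Next, fix $\varepsilon \in (0,1)$ and set $c := \ln\bigl(\E[M_0(\vec z)]/\varepsilon\bigr)$. I would then chain the inequalities
\[
\Pr\left(\int_0^T \vec z^\intercal(t)\, dS_{i,t}(\vec\alpha_i^*) - V_{i,T}(\vec z) \ge c\right)
= \Pr\bigl(L_T(\vec z) \ge e^{c}\bigr)
\le \Pr\bigl(M_T(\vec z) \ge e^{c}\bigr)
\le \frac{\E[M_T(\vec z)]}{e^{c}},
\]
where the first equality is just exponentiation, the second inequality uses $L_T(\vec z) \le M_T(\vec z)$ a.s., and the last step is Markov's inequality for the nonnegative random variable $M_T(\vec z)$ at level $e^c > 0$. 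Finally, the supermartingale property of $(M_t(\vec z))_{t\in[0,T]}$ relative to $(\mathcal H_t)_{t\in[0,T]}$ yields $\E[M_T(\vec z)] \le \E[M_0(\vec z)]$, so the right-hand side is at most $\E[M_0(\vec z)]/e^{c} = \varepsilon$ by the choice of $c$, which is exactly the claimed bound.

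The argument is essentially bookkeeping, and I do not anticipate a genuine obstacle within this lemma; the only points deserving a word of care are (i) integrability and nonnegativity of $M_T(\vec z)$, so that Markov's inequality and $\E[M_T] \le \E[M_0]$ are both legitimate — nonnegativity is automatic from $M_t \ge L_t > 0$, and integrability is part of the definition of a supermartingale — and (ii) measurability of the event in question, which holds because $t\mapsto\int_0^t \vec z^\intercal dS_{i,\tau}$ and $t \mapsto V_{i,t}(\vec z)$ are adapted to $(\mathcal H_t)$. The substantive work underlying Theorem~\ref{thm:concentration} and Theorem~\ref{prop:concentration} lies not here but in the companion lemma, namely in exhibiting an admissible intrinsic-variance process $V_{i,t}(\vec z)$ together with a dominating supermartingale $M_t(\vec z)$ (this is where the explicit expression for $V_i(\vec z,\vec\alpha_i^*)$ as the integrated conditional log-moment-generating function arises), so that the present maximal-inequality step can be invoked as a black box.
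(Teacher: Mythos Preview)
Your proof is correct and matches the paper's own argument essentially line for line: the paper also bounds $\E[\exp(\cdot)]\le \E[M_T(\vec z)]\le \E[M_0(\vec z)]$ via the supermartingale property and then applies Markov's inequality to obtain the $\varepsilon$ bound. The only cosmetic difference is that you pass through the intermediate event $\{M_T(\vec z)\ge e^c\}$ before invoking Markov, whereas the paper applies Markov directly to $L_T(\vec z)$ using $\E[L_T(\vec z)]\le \E[M_T(\vec z)]$; the two routes are equivalent.
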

\begin{proof}
By the property of a supermartingale, we have
\[
\E\left[\exp\left(\int_0^T\vec z^\intercal(t) dS_{i,t}(\vec\alpha_i^*) - V_{i,T}(\vec z)\right)\right]\leq \E[M_T(\vec z)]\leq \E[ M_0(\vec z)],
\]
and by Markov's inequality,
\begin{align*}
&\ \Pr\left[\int_0^T\vec z^\intercal(t) dS_{i,t}(\vec\alpha_i^*) - V_{i,T}(\vec z)\geq \ln(\E[M_0(\vec z)]/\varepsilon)\right] \\
=&\ \Pr\left[\exp\left(\int_0^T\vec z^\intercal(t) dS_{i,t}(\vec\alpha_i^*) - V_{i,T}(\vec z)\right)\geq \E[M_0(\vec z)]/\varepsilon\right]\\
\leq&\ \frac{ \E\left[\exp\left(\int_0^T\vec z^\intercal(t) dS_{i,t}(\vec\alpha_i^*) - V_{i,T}(\vec z)\right)\right]}{\E[M_0(\vec z)]/\varepsilon}
\leq \varepsilon.
\end{align*}
\end{proof}
Moreover, the intrinsic variance can be characterized explicitly by the following result. 

\begin{lem}\label{fact:v}
Let \begin{equation}
V_{i,t}(\vec z) = \int_0^{t}\left(\lambda_i^*(\tau)\exp(\lambda_i^{*-1}(\tau)\vec z^\intercal(\tau) \vec\eta_i(\tau)) - \vec z^\intercal(\tau) \vec\eta_i(\tau) - \lambda_i^*(\tau)\right)d\tau.\label{eq:define_v}
\end{equation}
\[
M_t(\vec z) =\exp\left( \int_0^t\vec z^\intercal(\tau) dS_{i,\tau}(\vec\alpha_i^*) - V_{i,t}(\vec z)\right)
\]
is a supermartingale, with $M_0(\vec z) = 1$ almost surely.
\end{lem}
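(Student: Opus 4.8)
The plan is to recognize $M_t(\vec z)$ as the Dol\'eans--Dade (stochastic) exponential of a point-process martingale, which makes both nonnegativity and the supermartingale property transparent. Set $a(\tau) := \lambda_i^{*-1}(\tau)\,\vec z^\intercal(\tau)\vec\eta_i(\tau)$, so that $\vec z^\intercal(\tau)\vec\eta_i(\tau) = \lambda_i^*(\tau)\,a(\tau)$; note $a$ is predictable since $\vec z$ is adapted to $(\mathcal H_{\tau^-})$ while $\lambda_i^*$ and $\vec\eta_i$ are left-continuous. Using $dS_{i,\tau}(\vec\alpha_i^*) = \lambda_i^{*-1}(\tau)\vec\eta_i(\tau)(dN_\tau^i - \lambda_i^*(\tau)\,d\tau)$ and the definition \eqref{eq:define_v} of $V_{i,t}$, the two contributions linear in $\vec z^\intercal\vec\eta_i$ cancel, giving
\[
\int_0^t \vec z^\intercal(\tau)\,dS_{i,\tau}(\vec\alpha_i^*) - V_{i,t}(\vec z) = \int_0^t a(\tau)\,dN_\tau^i - \int_0^t \lambda_i^*(\tau)\bigl(e^{a(\tau)}-1\bigr)\,d\tau ,
\]
so $M_t(\vec z) = \exp\!\bigl(\int_0^t a\,dN_\tau^i - \int_0^t \lambda_i^*(e^{a}-1)\,d\tau\bigr)$, and in particular $M_0(\vec z)=e^0=1$ almost surely.

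Next I would differentiate $M_t(\vec z)=e^{X_t}$, where $X_t := \int_0^t a\,dN_\tau^i - \int_0^t \lambda_i^*(e^{a}-1)\,d\tau$ is a finite-variation process that jumps by $a(t)$ at each event of $N^i$ and otherwise decreases continuously at rate $\lambda_i^*(t)(e^{a(t)}-1)$. By the change-of-variables formula for pure-jump finite-variation semimartingales (no quadratic-variation correction arises because the continuous part of $X$ has finite variation), an event of $N^i$ multiplies $M$ by $e^{a(t)}$ while the continuous drift contributed by the map $x\mapsto e^x$ is $-M_{t^-}(\vec z)\,\lambda_i^*(t)(e^{a(t)}-1)\,dt$, which is precisely the compensator of the jump term; hence
\[
dM_t(\vec z) = M_{t^-}(\vec z)\bigl(e^{a(t)}-1\bigr)\bigl(dN_t^i-\lambda_i^*(t)\,dt\bigr),
\]
i.e.\ $M_t(\vec z)=\mathcal{E}(L)_t$ is the stochastic exponential of $L_t:=\int_0^t(e^{a(\tau)}-1)\,(dN_\tau^i-\lambda_i^*(\tau)\,d\tau)$, a local martingale whose jumps $e^{a(\tau)}-1$ exceed $-1$; in particular $M_t(\vec z)\ge 0$.

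Finally, since $dN_t^i - \lambda_i^*(t)\,dt$ is the innovations martingale of the Hawkes counting process with respect to $(\mathcal H_t)$ and the integrand $M_{t^-}(\vec z)(e^{a(t)}-1)$ is predictable, the displayed equation exhibits $M_t(\vec z)$ as a local martingale started at the integrable value $1$. Taking a localizing sequence $\tau_n\uparrow\infty$ for which each $M_{\cdot\wedge\tau_n}(\vec z)$ is a genuine martingale, nonnegativity and conditional Fatou give, for $s\le t$, $\E[M_t(\vec z)\mid\mathcal H_s] \le \liminf_n\E[M_{t\wedge\tau_n}(\vec z)\mid\mathcal H_s] = \liminf_n M_{s\wedge\tau_n}(\vec z) = M_s(\vec z)$, which is the asserted supermartingale property (and with $s=0$ it also shows $M_t(\vec z)$ is integrable).

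I expect the main obstacle to be one of rigor rather than of idea: one must justify the jump/Itô calculus on $[0,T]$ — legitimate because, under the stationarity and moment hypotheses already in force, $N^i$ has a.s.\ finitely many events on $[0,T]$, so every integral above is a genuine finite sum plus an absolutely convergent Lebesgue integral — and one must confirm that no quadratic-variation term enters, precisely because the continuous part of the exponent has finite variation; the local-to-supermartingale passage is then routine given $M_t(\vec z)\ge 0$. As a cross-check, since $V_{i,t}(\vec z)$ coincides with $\int_0^t \log\E[\exp(\vec z^\intercal(\tau)\,dS_{i,\tau})\mid\mathcal H_{\tau^-}]$ (the first expression for $V_i$ in Theorem~\ref{prop:concentration}), the infinitesimal identity $\E[\exp(\vec z^\intercal(t)\,dS_{i,t}-dV_{i,t})\mid\mathcal H_{t^-}]=1+o(dt)$ is exactly the heuristic content of the martingale property established above.
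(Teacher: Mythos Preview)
Your proof is correct and takes a genuinely different route from the paper's. The paper argues directly at the infinitesimal level: it computes
\[
\lim_{\Delta t\to 0^+}\frac{\log\E\!\left[M_{t+\Delta t}(\vec z)/M_t(\vec z)\,\big|\,\mathcal H_t\right]}{\Delta t}=0
\]
by expanding the conditional expectation of $\exp(\vec z^\intercal dS_{i,t}-dV_{i,t})$ using $\Pr(\Delta N_t^i=1\mid\mathcal H_t)=\lambda_i^*(t)\Delta t+o(\Delta t)$, and from this concludes (without further justification) that $M_t$ is in fact a martingale. You instead rewrite $M_t(\vec z)$ as the Dol\'eans--Dade exponential $\mathcal E(L)_t$ of the compensated integral $L_t=\int_0^t(e^{a}-1)\,(dN^i-\lambda_i^*\,d\tau)$, read off nonnegativity and the local-martingale property from general theory, and then pass to the supermartingale inequality via Fatou. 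Your approach is more rigorous: it makes explicit the step the paper glosses over, namely that a vanishing infinitesimal drift only yields a \emph{local} martingale, and that the honest conclusion without further integrability hypotheses is the supermartingale property (which is all the lemma claims and all that Lemma~\ref{lem:intrinsic_variance} needs). The paper's computation is more elementary and self-contained, avoiding any appeal to stochastic-exponential machinery, but its final sentence ``$M_t$ is actually a martingale'' is stronger than what the calculation strictly establishes. Your closing cross-check is exactly the paper's computation in disguise, so the two arguments are consistent.
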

\begin{proof}
For any $t$, since $V_{i,t}$ is continuous and the right derivative exists,
\begin{align*}
&\ \lim_{\Delta t\to 0^{+}}\frac{\log\E\left[M_{t+\Delta t}(\vec z)/M_t(\vec z)|\mathcal H_t\right]}{\Delta t} \\
= &\ \lim_{\Delta t \to 0^+}\frac{\log\E\left[\exp\left(\int_t^{t+\Delta t} \vec z^\intercal (\tau) dS_{i,\tau}(\vec{\alpha}_i^*)-\Delta V_{i,t}(\vec z)\right)|\mathcal H_{t}\right]}{\Delta t}\\
= &\ \lim_{\Delta t \to 0^+}\frac{\log\E\left[\exp\left(\vec z^\intercal(t)\vec\eta_i(t)(\lambda_i^{*-1}\Delta N_t^i -\Delta t\right)|\mathcal H_{t}\right]}{\Delta t}-(\lambda_i^*\exp(\lambda_i^{*-1}\vec z^\intercal (t)\vec\eta_i(t))-\vec z^\intercal (t) \vec\eta_i(t)-\lambda_i^*(t))\\
=&\ \lim_{\Delta t\to 0^+}\frac{\log\left(\lambda_i^*(t)\Delta t\exp(\lambda_i^{*-1}\vec z^\intercal (t) \vec\eta_i(t)) + (1-\lambda_i^*(t)\Delta t)\exp(-\vec z^\intercal (t)\vec\eta_i(t)\Delta t)\right)}{\Delta t}\\
&\ -(\lambda_i^*\exp(\lambda_i^{*-1}\vec z^\intercal (t) \vec\eta_i(t))-\vec z^\intercal(t) \vec\eta_i(t)-\lambda_i^*(t)) \\
=&\ 0.
\end{align*}
From this we can see that $M_t$ is actually a martingale.
\end{proof}

\begin{proof}[Proof of Theorem\,\ref{thm:concentration}, Theorem\,\ref{prop:concentration} and Corollary\,\ref{corollary:cs_adapted_z}]
From Lemma\,\ref{lem:intrinsic_variance} and Lemma \ref{fact:v}, we have immediately 
\begin{equation}
\Pr\left[ \int_0^T \vec z^\intercal(t) dS_{i,t}(\vec\alpha_i^*) - V_{i,T}(\vec z)\geq \ln(1/\varepsilon)\right] \leq \varepsilon,\ \forall \vec z \in \R^D,\forall \varepsilon\in(0,1), 
\label{eq:simple_prop_bound}
\end{equation}
where $V_{i,T}(\vec z)$ is chosen as (\ref{eq:define_v}). 
Moreover, we can also choose multiple $\vec z$ to bound $S_{i}(\vec\alpha_i^*)$ in all directions. By simple union bound, it holds that 
\[
\Pr\left[\exists k\in [K], \int_0^T\vec z_k^\intercal (t) dS_{i,t}(\vec\alpha_i^*) - V_{i,T}(\vec z_k)\geq \ln(K/\varepsilon) \right]\leq K\varepsilon/K = \varepsilon, \ \forall \vec z_1,\cdots,\vec z_K\in\R^D,\forall \varepsilon\in(0,1).
\]
We define continuous process $V_{i,t}$ for any $\vec\alpha_i$ as
\eqref{eq:v_alpha}, $\vec\alpha_i^*$ falls into the confidence set $\mathcal C_{i,\varepsilon}$ with probability at least $1-\varepsilon$.
\end{proof}

The proof of Lemma \ref{lem:3.1} relies on the following lemma:

\begin{lem}[Ogata \cite{ogata1978asymptotic}, Lemma 2]
If $\xi_t$ is a stationary predictable process, then
\[
\frac{1}{T}\int_0^T\xi_tdt \to \E\left[\xi\right]
\]
with probability 1. In addition, if $\xi_t$ has finite second moment, then
\[
\frac{1}{T}\int_0^T\xi_t\frac{dN_t}{\lambda(t)} \to \E\left[\xi\right]
\]
with probability 1.
\label{lem:ogata_convergence}
\end{lem}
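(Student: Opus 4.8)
The plan is to handle the two assertions separately: the first is a continuous-time strong law of large numbers (ergodic theorem) for the stationary Hawkes process, and the second reduces to the first after subtracting a martingale term.

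For the first assertion I would argue as follows. Under the standing assumption that the Hawkes process is stationary (equivalently, the usual subcriticality/stability condition on $A$ and the kernels $\varphi_{ij}$, which guarantees a finite mean intensity), the process is in fact ergodic; this is classical and follows, for instance, from the Poisson cluster representation of a subcritical Hawkes process, which is mixing. Since $\xi_t$ is a (predictable, hence well-defined path functional) measurable transformation of the stationary ergodic trajectory, the shifted process $(\xi_{t+s})_{s\ge 0}$ is itself stationary and ergodic, and assuming $\E|\xi|<\infty$ (which is implicit in the statement, since $\E[\xi]$ must exist), Birkhoff's pointwise ergodic theorem in its continuous-time form — applied to the measure-preserving flow of time shifts — gives $\frac{1}{T}\int_0^T \xi_t\,dt \to \E[\xi]$ almost surely. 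Predictability of $\xi_t$ is used only to make the time integral well defined; it plays no further role.

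For the second assertion I would write
\[
\frac{1}{T}\int_0^T \xi_t\,\frac{dN_t}{\lambda(t)} \;=\; \frac{1}{T}\int_0^T \frac{\xi_t}{\lambda(t)}\bigl(dN_t-\lambda(t)\,dt\bigr) \;+\; \frac{1}{T}\int_0^T \xi_t\,dt \;=\; \frac{M_T}{T} \;+\; \frac{1}{T}\int_0^T \xi_t\,dt.
\]
The second term converges to $\E[\xi]$ by the first part. The process $M_T=\int_0^T \frac{\xi_t}{\lambda(t)}\bigl(dN_t-\lambda(t)\,dt\bigr)$ is a locally square-integrable martingale, since $\xi_t/\lambda(t)$ is predictable and $dN_t-\lambda(t)\,dt$ is the compensated counting measure, with predictable quadratic variation $\langle M\rangle_T=\int_0^T \frac{\xi_t^2}{\lambda(t)^2}\lambda(t)\,dt=\int_0^T \frac{\xi_t^2}{\lambda(t)}\,dt$. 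Applying the first part to the stationary predictable process $\xi_t^2/\lambda(t)$ — whose mean is finite because $\xi$ has finite second moment and $\lambda(t)\ge \mu_i>0$ — yields $\langle M\rangle_T/T\to \E[\xi^2/\lambda]<\infty$ almost surely, so $\langle M\rangle_T=O(T)$ and $\int_0^\infty (1+s)^{-2}\,d\langle M\rangle_s<\infty$ a.s. The strong law of large numbers for locally square-integrable martingales then gives $M_T/T\to 0$ almost surely, and adding the two terms completes the proof.

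I expect the main obstacle to be the first part, specifically the passage from \emph{stationarity} to \emph{ergodicity}, which is what forces the almost-sure limit to be the deterministic constant $\E[\xi]$ rather than a random shift-invariant limit; this is where the structural (cluster-representation / mixing) properties of subcritical Hawkes processes must be invoked. A secondary technical point is the martingale SLLN in the pure-jump setting and the degenerate case $\E[\xi^2/\lambda]=0$ (i.e.\ $\xi\equiv 0$), which is trivial and dispatched separately; one should also verify that the integrability hypotheses $\E|\xi|<\infty$ and $\E[\xi^2/\lambda]<\infty$ follow from the finite-moment assumptions together with $\lambda\ge \mu_i>0$.
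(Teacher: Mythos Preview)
The paper does not supply its own proof of this lemma: it is quoted verbatim as Lemma~2 of Ogata~\cite{ogata1978asymptotic} and invoked as a black box in the proof of Lemma~\ref{lem:3.1}. So there is nothing in the paper to compare your argument against.

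That said, your proposal is the standard route and is essentially how the result is obtained in Ogata's original paper. The first assertion is exactly the continuous-time Birkhoff theorem applied to the stationary ergodic shift flow, and you correctly flag that the nontrivial point is upgrading stationarity to ergodicity; the Poisson cluster representation of a subcritical Hawkes process (hence mixing) is the usual way to close that gap. For the second assertion, your compensator decomposition $M_T/T + T^{-1}\int_0^T \xi_t\,dt$ together with the martingale strong law via $\langle M\rangle_T = \int_0^T \xi_t^2/\lambda(t)\,dt = O(T)$ is precisely the argument Ogata uses. The integrability checks you list ($\E|\xi|<\infty$, $\E[\xi^2/\lambda]<\infty$ from $\lambda\ge\mu_i>0$) are the right ones, and the degenerate case is indeed trivial. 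In short: your proof is correct and matches the cited source; the paper itself offers no alternative.
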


\begin{proof}[Proof of Lemma\,\ref{lem:3.1}]
Denote 
$$
\Delta \vec\alpha_i = \vec\alpha_i - \hat{\vec\alpha}_i.
$$
Using Taylor expansion and based on the mean value theorem, there exists $\tilde{\vec{\alpha}}_i$ between $\vec{\alpha}_i$ and $\hat{\vec{\alpha}}_i$, such that 
\[
\frac{1}{T}S_i(\vec\alpha_i)=\frac{1}{T} H_i(\hat{\vec{\alpha}}_i) \Delta\vec\alpha_i +\frac{1}{2T} \sum_{k,l\in[D]}\frac{\partial^2 S_i(\tilde{\vec\alpha}_i)}{\partial\alpha_{ik}\partial\alpha_{il}}\Delta\alpha_{ik}\Delta\alpha_{il}.
\]
For any $j,k,l\in [D]$, the $j$-th entry of $\frac{\partial^2 S_i(\tilde{\vec\alpha}_i)}{\partial\alpha_{ik}\partial\alpha_{il}}$ is
\[
\int_0^T \frac{2\eta_{ij}(t)\eta_{ik}(t)\eta_{il}(t)}{\tilde\lambda_i^3(t)}dN_t^i.
\]
Under the assumption that the moment generating function of $\vec\eta_i$ exists, and $\tilde\lambda_i\geq \mu_i>0$, by Lemma \ref{lem:ogata_convergence}, as $T\to\infty$,
\[
\frac{1}{T}\int_0^T \frac{2\eta_{ij}(t)\eta_{ik}(t)\eta_{il}(t)}{\tilde\lambda_i^3(t)}dN_t^i\leq \frac{1}{T\mu_i^3}\int_0^T2\eta_{ij}(t)\eta_{ik}(t)\eta_{il}(t)dN_t^i
\]
is uniformly bounded for any $\tilde{\vec\alpha}_i\geq 0$ with probability 1. Now we have for any $\vec\alpha_i$,
\[
\|S_i(\vec\alpha_i) - H_i(\hat{\vec\alpha}_i)\Delta\vec\alpha_i \| \leq  O(T)\|\Delta\vec\alpha_i\|^2.
\]
Since \[
\frac{1}{T}H_i(\hat{\vec\alpha}_i)\to I_i^*
\]
with probability 1, we have
\[
\|S_i(\vec\alpha_i) - TI_i^*\Delta\vec\alpha_i\| \leq O(T)\|\Delta\vec\alpha_i\|^2 + \|(H_i(\hat{\vec\alpha}_i)-TI_i^*)\Delta\vec\alpha_i\| \leq  O(T)\|\Delta\vec\alpha_i\|^2 + o(T)\|\Delta\vec\alpha_i\|,
\]
which is \eqref{eq:S_approx}.

For \eqref{eq:V_approx}, similarly we use the Taylor expansion at $\vec z = 0$ and $\hat{\vec\alpha}_i$, by the mean value theorem,
\begin{align*}
V_i(\vec z,\vec\alpha_i) = &\ V_i(\vec 0,\vec\alpha_i) + \frac{\partial V_i(\vec 0,\vec\alpha_i)}{\partial \vec z^\intercal}\vec z + \frac{1}{2}\vec z^\intercal \frac{\partial^2 V_i(\vec 0,\vec\alpha_i)}{\partial \vec z\partial\vec z^\intercal}\vec z +\frac{1}{6}\sum_{j,k,l\in[D]}\frac{\partial^3V_i(\tilde{\vec z},\vec\alpha_i)}{\partial z_j\partial z_k\partial z_l}z_jz_kz_l\\
=&\ \frac{1}{2}\vec z^\intercal \frac{\partial^2 V_i(\vec 0,\vec\alpha_i)}{\partial \vec z\partial\vec z^\intercal}\vec z+\frac{1}{6}\sum_{j,k,l\in[D]}\frac{\partial^3V_i(\tilde{\vec z},\vec\alpha_i)}{\partial z_j\partial z_k\partial z_l}z_jz_kz_l\\
=&\ \frac{1}{2}\vec z^\intercal \frac{\partial^2 V_i(\vec 0,\hat{\vec\alpha}_i)}{\partial \vec z\partial\vec z^\intercal}\vec z+\frac{1}{2}\sum_{j,k,l\in[D]}\frac{\partial^3 V_i(\vec 0,\tilde{\vec\alpha}_i)}{\partial z_j\partial z_k\partial \alpha_{il}}z_jz_k\Delta\alpha_{il}+\frac{1}{6}\sum_{j,k,l\in[D]}\frac{\partial^3V_i(\tilde{\vec z},\vec\alpha_i)}{\partial z_j\partial z_k\partial z_l}z_jz_kz_l,
\end{align*}
for some $\tilde{\vec z}$ between $\vec 0,\vec z$, some $\tilde{\vec\alpha}_i$ between $\hat{\vec\alpha}_i,\vec\alpha_i$. By the assumption that the moment generating function of $\vec\eta_i$ exists and by Lemma \ref{lem:ogata_convergence}, for the first term
\[
\frac{1}{2T}\vec z^\intercal \frac{\partial^2 V_i(\vec 0,\hat{\vec\alpha}_i)}{\partial \vec z\partial\vec z^\intercal}\vec z =\frac{1}{2T} \vec z^\intercal\int_0^T\frac{\vec\eta_i(t)\vec\eta_i(t)^\intercal}{\hat\lambda_i(t)}dt\vec z\to \frac{\vec z^\intercal I_i^*\vec z}{2}
\]
with probability 1. For the second term, for any $j,k,l\in [D]$,
\[
\left|\frac{1}{T}\frac{\partial^3 V_i(\vec 0,\tilde{\vec\alpha}_i)}{\partial z_j\partial z_k\partial \alpha_{il}}\right| = \left|\frac{1}{T}\int_0^T\frac{\eta_{ij}(t)\eta_{ik}(t)\eta_{il}(t)}{\tilde\lambda_i^2(t)}dt\right|\leq \left|\frac{1}{T}\int_0^T\frac{\eta_{ij}(t)\eta_{ik}(t)\eta_{il}(t)}{\mu_i^2}dt\right|.
\]
is uniformly bounded for any $\tilde{\vec\alpha}_i\geq 0$ with probability 1. For the third term, for any $i,j,k\in [D]$, 
\begin{align*}
\left|\frac{1}{T}\frac{\partial^3V_i(\tilde{\vec z},\vec\alpha_i)}{\partial z_j\partial z_k\partial z_l}\right| =&\ \left|\frac{1}{T}\int_0^T \frac{\eta_{ij}(t)\eta_{ik}(t)\eta_{il}(t)}{\lambda_i^{2}(t)}\exp\left(\lambda_i^{-1}\vec\eta_i^\intercal(t)\tilde{\vec z}\right)dt\right|\\
\leq &\ \left|\frac{1}{T}\int_0^T\frac{\eta_{ij}(t)\eta_{ik}(t)\eta_{il}(t)}{\mu_i^{2}}\min\left\{\exp\left(\mu_i^{-1}\vec\eta_i^\intercal(t)\tilde{\vec z}\right),1\right\}dt\right|,
\end{align*}
is convex in $\vec z$. There exists a neighborhood $U$ of $\vec 0$ such that the expectation of the term above for any $\vec z\in U$ is finite, and by its convexity, it is uniformly bounded in $U$ with probability 1. 

Together, we have
\[
\left|V_i(\vec z,\vec\alpha_i) - \frac{T}{2} \vec z^\intercal I_i^*\vec z\right| \leq o(T)\|\vec z\|^2 + O(T)\|\Delta\vec\alpha_i\|\|\vec z\|^2 + O(T)\|\vec z\|^3.
\]
\end{proof}

\begin{proof}[Proof of Proposition\,\ref{prop:width}]
We prove a slightly weaker version: for any neighborhood $U_1$ of $\vec\alpha_i^*$, such that the diameter of $U_1$ is $o(1)$, the width of $\mathcal C_{i,\varepsilon}\cap U_1$ in $\alpha_{ij}$ converges to $2\sqrt{2\ln(K/\varepsilon)\sigma_{ij}^2/T}$ with probability 1.

Before proving the proposition, we explain why we choose $\vec z_1,\cdots,\vec z_K$ this way. Let $\vec z_1,\cdots,\vec z_{2D}$ be $\pm c_j I_i^{*-1}\vec e_j$, $c_j>0$, $j=1,\cdots,D$. By Lemma\,\ref{lem:3.1}, we have 
\begin{align*}
(\pm c_jI_i^{*-1}\vec e_j)^\intercal S_i(\vec\alpha_i) = (\pm c_jI_i^{*-1}\vec e_j)^\intercal TI_i^*(\vec\alpha_i - \hat{\vec\alpha}_i) + c_j\left( O(T) \|\vec\alpha_i - \hat{\vec\alpha}_i\|^2+ o(T)\|\vec\alpha_i - \hat{\vec\alpha}_i\|\right),
\end{align*}
for any $\vec \alpha_i\geq 0$.
Note that 
$$(\pm c_jI_i^{*-1}\vec e_j)^\intercal T  I_i^*(\vec\alpha_i - \hat{\vec\alpha}_i) = \pm c_jT(\alpha_{ij}-\hat\alpha_{ij}).$$
By \eqref{eq:V_approx}, we have
\begin{align*}
V_{i}(\pm c_jI_i^{*-1}\vec e_j,\vec\alpha_i) =   \frac{c^2T}{2}\vec e_j^\intercal I_i^{*-1}\vec e_j   + c^2\left(o(T) + O(T)\|\vec\alpha_i-\hat{\vec\alpha}_i\|\right) + O(T)c^3.
\end{align*} 
The constraints
\[
\vec z_k^\intercal S_i(\vec\alpha_i) - V_i(\vec z_k,\vec\alpha_i)\leq \ln(K/\varepsilon),\quad k=1,\cdots,2D
\]
becomes
\begin{align*}
c_jT|\alpha_{ij}-\hat\alpha_{ij}| - \frac{c_j^2T}{2}\sigma_{ij}^2 +&\ o(T)c_j^2+ O( T)c_j^3+(O(T)c_j^2+o(T)c_j)\|\vec\alpha_i-\hat{\vec\alpha}_i\|  + O(T)c_j\|\vec\alpha_i-\hat{\vec\alpha}_i\|^2\\
\leq&\ \ln(K/\varepsilon),\quad j=1,\cdots,D.
\end{align*}
If all the $o(\cdot),O(\cdot)$ terms are negligible when $T\to \infty$, the width of $\mathcal C_{i,\varepsilon}$ in $\alpha_{ij}$
is \[
2\left(\frac{\ln(K/\varepsilon)}{c_jT} + \frac{c_j\sigma_{ij}^2}{2}\right),
\] 
and is minimized when 
\[
c_j = \sqrt{\frac{2\ln(K/\varepsilon)}{T\sigma_{ij}^2}}.
\]
The $o(\cdot),O(\cdot)$ terms are indeed negligible with this choice of $c_j$, because the constraints now becomes
\begin{equation}
\sqrt{2T\ln(K/\varepsilon)/\sigma_{ij}^2}|\alpha_{ij}-\hat\alpha_{ij}|   +o(T^{1/2})\|\vec\alpha_i-\hat{\vec\alpha}_i\|  + O(T^{1/2})\|\vec\alpha_i-\hat{\vec\alpha}_i\|^2
\leq 2\ln(K/\varepsilon)+ o(1),
\label{eq:constraints}
\end{equation}
$j=1,\cdots,D$. Let $U_1$ be any neighborhood of $\vec\alpha_i^*$ with diameter $o(1)$. For any $\vec\alpha_i\in\mathcal C_{i,\varepsilon}\cap U_1$, we choose  \[
j' =\mathop{\arg\max}_{j\in [D]}
|\alpha_{ij}-\hat\alpha_{ij}|/\sigma_{ij}.
\]
By the way we choose $j'$, $\|\vec\alpha_i-\hat{\vec\alpha}_i\|$ can be upper bounded by $|\alpha_{ij'}-\hat\alpha_{ij'}|$ up to some constant scale, and $|\alpha_{ij'}-\hat\alpha_{ij'}| = o(1)$. There is
\[
\sqrt{2T\ln(K/\varepsilon)/\sigma_{ij'}^2}|\alpha_{ij'}-\hat\alpha_{ij'}|   +o(T^{1/2})|\alpha_{ij'}-\hat\alpha_{ij'}| 
\leq 2\ln(K/\varepsilon)+ o(1),
\]
and 
\[
\frac{|\alpha_{ij'}-\hat\alpha_{ij'}|}{\sigma_{ij'}}\leq \sqrt{\frac{2\ln (K/\varepsilon)}{T}}(1+o(1)).
\]
Again by the way we choose $j'$, this inequality holds for any $j\in[D]$. So the width of $\mathcal C_{i,\varepsilon}$ in $\alpha_{ij}$ is upper bounded by $2\sqrt{2\ln (K/\varepsilon)\sigma_{ij}^2/T}(1+o(1))$ with high probability. It is easy to see from \eqref{eq:constraints} that there exists $\vec\alpha_i\in\mathcal C_{i,\varepsilon}$ with $\alpha_{ij} = \hat\alpha_{ij}\pm \sqrt{2\ln (K/\varepsilon)\sigma_{ij}^2/T}(1-o(1))$. Together, we know that the width of $\mathcal C_{i,\varepsilon}$ converges to $2\sqrt{2\ln (K/\varepsilon)\sigma_{ij}^2/T}$ with probability 1.
\end{proof}






\end{document}